\documentclass{article}

\usepackage{arxiv}

\usepackage[utf8]{inputenc} 
\usepackage[T1]{fontenc}    
\usepackage{hyperref}       
\usepackage{url}            
\usepackage{booktabs}       
\usepackage{amsfonts}       
\usepackage{nicefrac}       
\usepackage{microtype}      
\usepackage{lipsum}		
\usepackage{graphicx}
\usepackage{natbib}
\usepackage{doi}

\DeclareMathAlphabet\mathbb{U}{msb}{m}{n}
\usepackage{enumitem}
\usepackage[vlined,ruled,linesnumbered]{algorithm2e}
\usepackage{nth}
\graphicspath{{Fig/}}
\usepackage{hyperref}
\usepackage{xcolor}

\usepackage[most]{tcolorbox}
\newtcolorbox{highlighted}{colback=yellow,coltext=black,breakable}
\usepackage{fdsymbol}
\usepackage[leqno]{mathtools}
\usepackage{graphicx}
\usepackage{color}
\usepackage{amsmath,amsfonts}
\usepackage{mathrsfs}
\usepackage{hyperref}
\usepackage{wrapfig}
\usepackage{listings}
\usepackage{ifpdf}
\hypersetup{pdfauthor={Name}}
\usepackage{subfig}
\usepackage{booktabs}
\newtheorem{definition}{Definition}
\newtheorem{example}{Example}
\newtheorem{remark}{Remark}
\newtheorem{lemma}{Lemma}
\newtheorem{proof}{Proof}




\newcommand{\val}{\nu}

\newcommand{\F}{\mathbf{F}}
\newcommand{\true}{\mathit{true}}

\newcommand{\Reals}{\mathbb{R}}

\newcommand{\params}{\mathcal{P}}

\newcommand{\mypara}[1]{\vspace{0.3em} \noindent {\bf #1.\ }}

\newcommand{\eqdef}{\mathrel{\stackrel{\makebox[0pt]{\mbox{\normalfont\tiny def}}}{=}}}


\newcommand{\proj}{\pi}

\newcommand{\projlex}{\proj_{\mathrm{lex}}}

\newcommand{\numclusters}{\mathit{numC}}



\newcommand{\T}{true}

\newcommand{\ev}[1]{\mathbf{F}_{#1}}
\newcommand{\glob}[1]{\mathbf{G}_{#1}}
\newcommand{\until}[1]{\mathrm{U}_{#1}}

\newcommand{\somewhere}[1]{\diamonddiamond_{#1}}
\newcommand{\everywhere}[1]{\boxbox_{#1}}
\newcommand{\surround}[1]{\circledcirc_{#1}}
\newcommand{\reach}[1]{\mathcal{R}_{#1}}
\newcommand{\escape}[1]{\mathcal{E}_{#1}}

\newcommand{\wfun}{W}

\newcommand{\route}{\tau}

\newcommand{\sts}{\sigma}
\newcommand{\traces}{\Sigma}

\newcommand{\dhaver}{d_\mathrm{hvrsn}}

\newcommand{\spatialmodel}{\mathcal{S}}
\newcommand{\locations}{L}
\newcommand{\reals}{\mathbb{R}}
\newcommand{\routes}{\mathcal{T}}
\newcommand{\routedistance}{d_{\spatialmodel}}

\newcommand{\timedomain}{\mathbb{T}}
\newcommand{\valuedomain}{\mathcal{V}}

\newcommand{\bikes}{B}
\newcommand{\slots}{S}

\newcommand{\distint}{D}

\newcommand{\timeparams}{\mathcal{P}_{\timedomain}}
\newcommand{\valueparams}{\mathcal{P}_{\valuedomain}}
\newcommand{\spaceparams}{\mathcal{P}_{\routedistance}}

\newcommand{\param}{\mathbf{p}}
\newcommand{\polarity}{\gamma}
\newcommand{\paramspace}{P}

\newcommand{\validitydomain}{V}
\newcommand{\validitydomainboundary}{\partial \validitydomain}

\newcommand{\preclex}{\prec_\mathrm{lex}}

\newcommand{\clusterlabel}{C}

\newcommand{\lfun}{\mathcal{L}}
\newcommand{\fwait}{\varphi_\mathrm{wait}}
\newcommand{\fwalk}{\varphi_\mathrm{walk}}

\title{Mining Interpretable Spatio-temporal Logic Properties for Spatially Distributed Systems}

\author{
  Sara Mohammadinejad\\
  University of Southern California\\
  \texttt{saramoha@usc.edu} \\
   \And
  Jyotirmoy V. Deshmukh*\\
  University of Southern California\\
  \texttt{jdeshmuk@usc.edu} \\
   \And
  Laura Nenzi*\\
  University of Trieste\\
  \texttt{lnenzi@units.it} 
}



\hypersetup{
pdftitle={Mining Interpretable Spatio-temporal Logic Properties for Spatially Distributed Systems},
pdfsubject={q-bio.NC, q-bio.QM},
pdfauthor={Sara Mohammadinejad, Jyotirmoy V. Deshmukh, Laura Nenzi},
pdfkeywords={Distributed systems, Unsupervised learning, Spatio-temporal data, Interpretability, Spatio-temporal reach and escape logic.}
}

\begin{document}
\maketitle

\begin{abstract}
The Internet-of-Things, complex sensor networks, multi-agent cyber-physical
systems are all examples of spatially distributed systems that continuously
evolve in time. Such systems generate huge amounts of spatio-temporal data,
and system designers are often interested in analyzing and discovering
structure within the data. There has been considerable interest in learning
causal and logical properties of temporal data using logics such as Signal
Temporal Logic (STL); however, there is limited work on discovering such
relations on {\em spatio}-{\em temporal} data. We propose the first set of
algorithms for {\em unsupervised learning} for spatio-temporal data. Our
method does automatic feature extraction from the spatio-temporal data by
projecting it onto the parameter space of a {\em parametric spatio-temporal
reach and escape logic} (PSTREL). We propose an agglomerative hierarchical
clustering technique that guarantees that each cluster satisfies a distinct
STREL formula. We show that our method generates STREL formulas of bounded
description complexity using a novel decision-tree approach which
generalizes previous unsupervised learning techniques for Signal Temporal
Logic. We demonstrate the effectiveness of our approach on case studies from
diverse domains such as urban transportation, epidemiology, green
infrastructure, and air quality monitoring.

\end{abstract}

\let\thefootnote\relax\footnotetext{*: Equal contribution}

\keywords{Distributed systems \and Unsupervised learning \and Spatio-temporal data \and Interpretability \and Spatio-temporal reach and escape logic.}

\section{Introduction}
Due to rapid improvements in sensing and communication technologies,
embedded systems are now often spatially distributed. Such spatially
distributed systems (SDS) consist of heterogeneous components embedded
in a specific topological space, whose time-varying behaviors evolve
according to complex mutual inter-dependence relations
\cite{nenzi2018qualitative}.  In the formal methods community,
tremendous advances have been achieved for verification and analysis
of distributed systems. However, most
formal techniques abstract away the specific spatial aspects of
distributed systems, which can be of crucial importance in certain
applications.  For example, consider the problem of developing a
bike-sharing system (BSS) in a ``sharing economy.'' Here, the system
consists of a number of bike stations that would use sensors to detect
the number of bikes present at a station, and use incentives to let
users return bikes to stations that are running low. The bike stations
themselves could be arbitrary locations in a city, and the design of
an effective BSS would require reasoning about the distance to nearby
locations, and the time-varying demand or supply at each location. For instance, the property ``there is always a bike and a slot available at distance d from a bike station'' depends on the distance of the bike station to its nearby stations. Evaluating whether the BSS functions correctly is a verification
problem where the specification is a {\em spatio}-{\em temporal} logic
formula.  Similarly, consider the problem of coordinating the
movements of multiple mobile robots, or a HVAC controller that
activates heating or cooling in parts of a building based on
occupancy. Given spatio-temporal execution traces of nodes in such
systems, we may be interested in analyzing the data to solve several
classical formal methods problems such as fault localization,
debugging, invariant generation or specification mining.  It is
increasingly urgent to formulate methods that enable reasoning about
spatially-distributed systems in a way that explicitly incorporates
their spatial topology.

In this paper, we focus on one specific aspect of spatio-temporal
reasoning: mining interpretable logical properties from data in an
SDS.  We model a SDS as a directed or undirected graph where
individual compute nodes are vertices, and edges model either the
connection topology or spatial proximity. In the past, analytic models
based on partial differential equations (e.g. diffusion equations)
\cite{fiedler2003spatio} have been used to express the spatio-temporal evolution of
these systems. While such formalisms are incredibly powerful, they are
also quite difficult to interpret.  Traditional machine learning (ML)
approaches have also been used to uncover the structure of such
spatio-temporal systems, but these techniques also suffer
from the lack of interpretability. Our proposed method draws on a
recently proposed logic known as {\em Spatio}-{\em Temporal Reach and
Escape Logic} (STREL) \cite{bartocci2017monitoring}. Recent research
on STREL has focused on efficient algorithms for runtime verification
and monitoring of STREL specifications
\cite{bartocci2017monitoring,bartocci2020moonlight}.  However, there
is no existing work on mining STREL specifications.  

Mined STREL specifications can be useful in many different contexts in
the design of spatially distributed systems; an incomplete list of
usage scenarios includes the following applications: (1) Mined STREL
formulas can serve as spatio-temporal invariants that are satisfied by
the computing nodes, (2) STREL formulas could be used by developers to
characterize the properties of a deployed spatially distributed
system, which can then be used to monitor any subsequent updates to
the system, (3) Clustering nodes that satisfy similar STREL formulas
can help debug possible bottlenecks and violations in communication
protocols in such distributed systems.

There is considerable amount of recent work on learning temporal logic
formulas from data
\cite{vazquez2017logical,jin2015mining,mohammadinejad2020interpretable,mohammadinejad2020mining}. In
particular, the work in this paper is closest to the work on
unsupervised clustering of time-series data using Signal Temporal
Logic \cite{vazquez2017logical}. In this work, the authors assume that the user
provides a Parametric Signal Temporal Logic (PSTL) formula, and the
procedure projects given temporal data onto the parameter domain of
the PSTL formula. The authors use off-the-shelf clustering techniques
to group parameter values and identify STL formulas corresponding to
each cluster. There are a few hurdles in applying such an approach to
spatio-temporal data. First, in \cite{vazquez2017logical}, the authors assume a
monotonic fragment of PSTL: there is no such fragment identified in
the literature for STREL. Second, in \cite{vazquez2017logical}, the authors
assume that clusters in the parameter space can be separated by
axis-aligned hyper-boxes. Third, given spatio-temporal data, we can
have different choices to impose the edge relation on nodes, which can
affect the formula we learn.

To address the shortcomings of previous techniques, we introduce
PSTREL, by treating threshold constants in signal predicates, time
bounds in temporal operators, and distance bounds in spatial operators
as parameters. We then identify a monotonic fragment of PSTREL, and
propose a multi-dimensional binary-search based procedure to infer
{\em tight} parameter valuations for the given PSTREL formula. We also
explore the space of implied edge relations between spatial nodes, proposing an algorithm to define the most suitable graph.
After defining a projection operator that maps a given spatio-temporal
signal to parameter values of the given PSTREL formula, we use an
agglomerative hierarchical clustering technique to cluster spatial
locations into hyperboxes. We improve the method of \cite{vazquez2017logical} by
introducing a decision-tree based approach to systematically split
overlapping hyperbox clusters. The result of our method produces
axis-aligned hyperbox clusters that can be compactly described by an
STREL formula that has length proportional to the number of parameters
in the given PSTREL formula (and independent of the number of
clusters). Finally, we give human-interpretable meanings for each
cluster. We show the usefulness of our approach considering four benchmarks: COVID-19 data from LA County, Outdoor air quality data, BSS data and movements of the customer in a Food Court.

\noindent \paragraph{\bf Running Example: A Bike Sharing System (BSS)}
To ease the exposition of key ideas in the paper, we use an example of
a BSS deployed in the city of Edinburgh, UK. The BSS consists of a
number of bike stations, distributed over a geographic area. Each
station has a fixed number of bike slots.  Users can pick up a bike,
use it for a while, and then return it to another station in the area.
The data that we analyze are the number of bikes (B) and empty slots
(S) at each time step in each bike station. With the advent of
electric bikes, BSS have become an important aspect in urban mobility,
and such systems make use of embedded devices for diverse purposes
such as tracking bike usage, billing, and displaying information about
availability to users over apps. Fig.~\ref{fig:bss_map_runex} shows the map of the Edinburgh city with the bike stations. Different colors of the nodes represent different learned clusters as can be seen in Fig.~\ref{fig:bss_clustering_runex}. For example, using our approach, we learn that stations in {\em orange} cluster have a long wait time, and stations in {\em red} cluster are the most undesirable stations as they have long wait time and do not have nearby stations with bike availability. If we look at the actual location of {\em red} points in Fig. 1b, they are indeed far away stations.

\begin{figure}[t]
\centering
    \subfloat[Clusters learned\label{fig:bss_clustering_runex}]{\includegraphics[width =
    .45\textwidth]{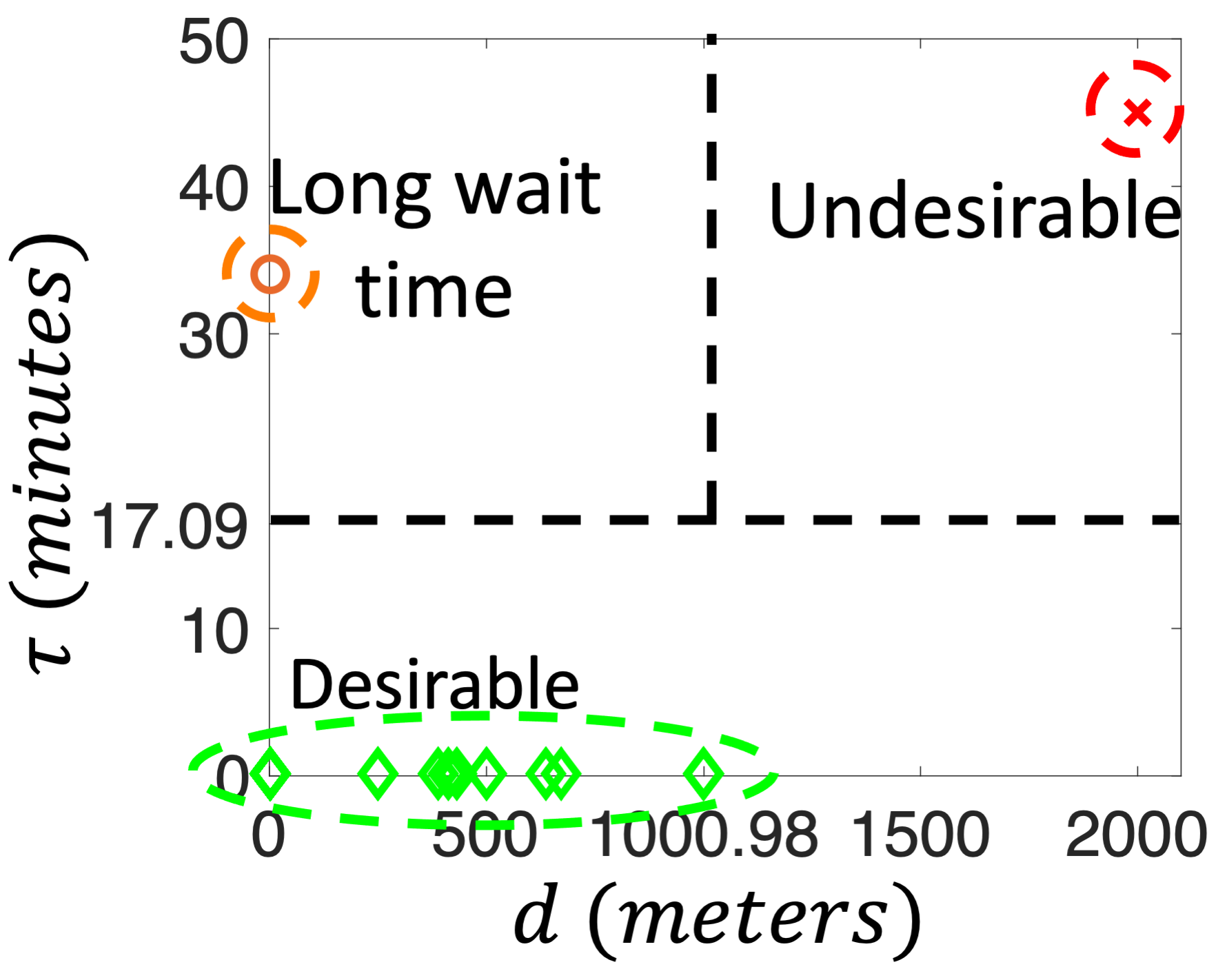}} \qquad
  \subfloat[BSS locations in Edinburgh\label{fig:bss_map_runex}]{\includegraphics[width = .45\textwidth]{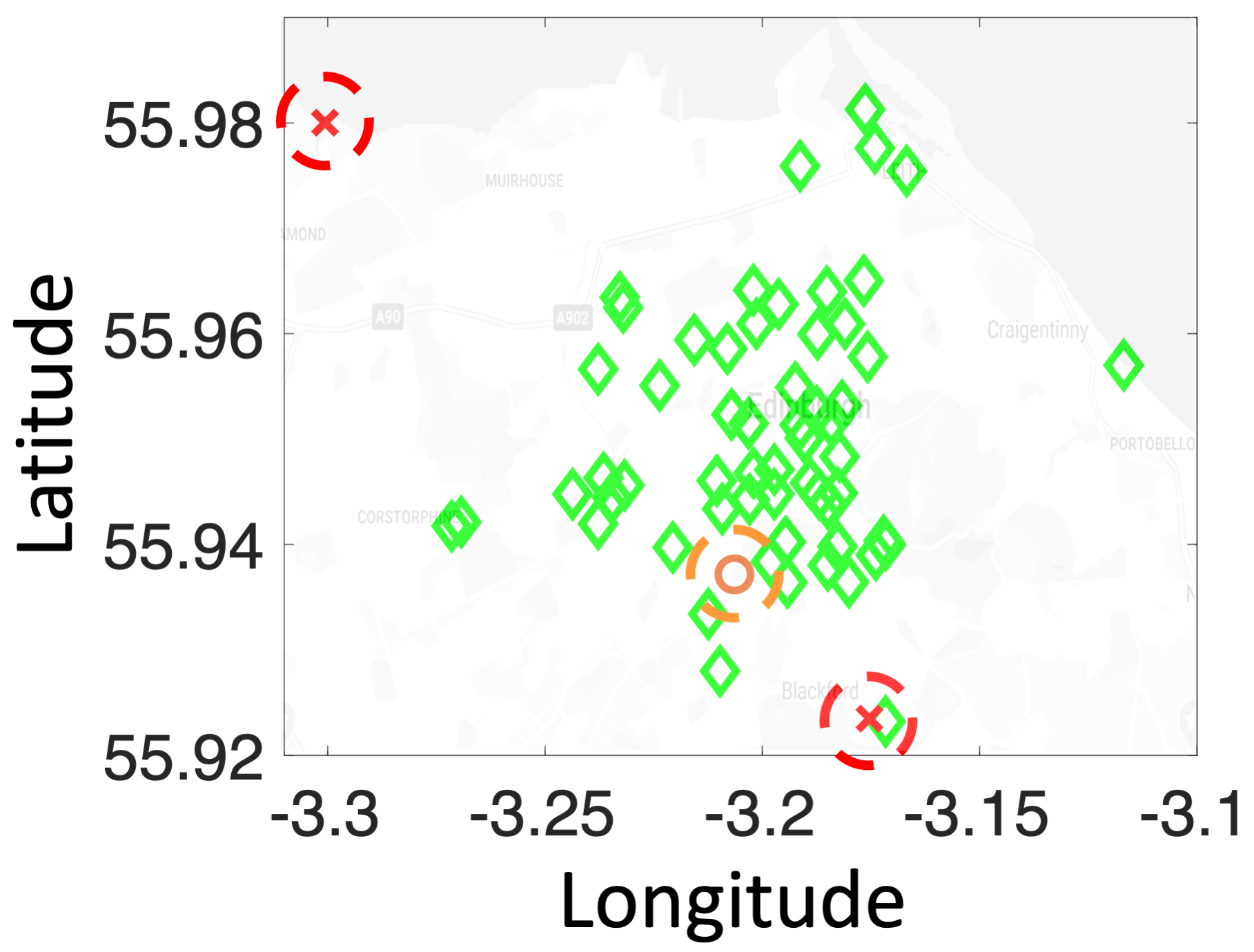}} 
\caption{Interpretable clusters automatically identified by our technique.}
\label{fig:bss_example_intro}
\end{figure}

\section{Background}
\label{sec:background}
In this section, we introduce the notation and terminology for spatial models and
spatio-temporal traces and we describe Spatio-Temporal Reach and Escape Logic (STREL). 

\begin{definition}[Spatial Model]
\label{def:spatmodel}
A spatial model $\spatialmodel$ is defined as a pair $\langle
\locations, \wfun \rangle$, where $\locations$ is a set of nodes or
locations and $\wfun \subseteq \locations \times \reals \times
\locations$ is a nonempty relation associating each distinct pair
$\ell_1, \ell_2 \in \locations$ with a label $w \in \Reals$ (also
denoted $\ell_1 \xrightarrow{w} \ell_2$).
\end{definition}
There are many different choices possible for the proximity
relation $\wfun$; for example, $\wfun$ could be defined in a way that
the edge-weights indicate spatial proximity, communication network
connectivity etc. Given a set of locations, unless there is a
user-specified $\wfun$, we note that there are several graphs (and
associated edge-weights) that we can use to express spatial models.
We explore these possibilities in Sec.~\ref{sec:graph}. For the rest
of this section, we assume that $\wfun$ is defined using the notion of
$(\delta,d)$-connectivity graph as defined in Definition~\ref{def:deltaconngraph}.

\newcommand{\metricspace}{M}

\begin{definition}[$(\delta,d)$-connectivity spatial model]
\label{def:deltaconngraph}
Given a compact metric space $\metricspace$ with the distance metric
$d:\metricspace\times\metricspace \to \Reals^{\ge 0}$, a set of
locations $\locations$ that is a finite subset of $\metricspace$, and
a fixed $\delta \in \Reals, \delta > 0$, a $(\delta,d)$-connectivity
spatial model is defined as $\langle\locations,\wfun\rangle$, where 
$(\ell_1,w,\ell_2) \in \wfun$ iff $d(\ell_1,\ell_2) = w$, and $w < \delta$. 
\end{definition}

\begin{example} In the BSS, each bike station is a node/location in
the spatial model, where locations are assumed to lie on the metric
space defined by the 3D spherical manifold of the earth's surface;
each location is defined by its latitude and longitude, and the
distance metric is the {\em Haversine distance}\footnote{Haversine Formula gives minimum distance between any two points on sphere by using their latitudes and longitudes.}.
Fig.~\ref{fig:delta_graph} shows the $\delta$-connectivity
graph of the Edinburgh BSS, with $\delta=1 km$.
\end{example}


\begin{definition}[Route]
For a spatial model $\spatialmodel =\langle L, \wfun \rangle$, a route
$\route$ is an infinite sequence $\ell_0\ell_1\cdots\ell_k\cdots$ such
that for any $i \geq 0$, $\ell_i \xrightarrow{w_i} \ell_{i+1}$. 
\end{definition}

\noindent For a route $\route$, $\route[i]$ denotes the $i^{th}$ node $\ell_i$
in $\route$, $\route[i..]$ indicates the suffix route
$\ell_i\ell_{i+1}...$, and $\route(\ell)$ denotes $\min{i\mid
\route[i]=\ell}$, i.e. the first occurrence of $\ell$ in $\route$.
Note that $\route(\ell) = \infty$ if $\forall i \route[i]\neq \ell$.
We use $\routes(\spatialmodel)$ to denote the set of routes in 
$\spatialmodel$, and $\routes(\spatialmodel,\ell)$ to denote the set 
of routes in $\spatialmodel$ starting from $\ell \in \locations$. We 
can use routes to define the route distance between two locations in the 
spatial model as follows.

\begin{definition}[Route Distance and Spatial Model Induced Distance]
\label{eq:routdist}
Given a route $\route$, the route distance along $\tau$ up to a
location $\ell$ denoted $\routedistance^\tau(\ell)$ is defined as
$\sum_{i=0}^{\route(\ell)} w_i$.  The spatial model induced distance
between locations $\ell_1$ and $\ell_2$ (denoted
$\routedistance(\ell_1,\ell_2)$) is defined as:
$\routedistance(\ell_1,\ell_2) = \min_{\tau \in
\routes(\spatialmodel,\ell_1)} \routedistance^\tau(\ell_2)$.
\end{definition}
\noindent Note that by the above definition,
$\routedistance^\route(\ell) = 0$ if $\tau[0] = \ell$ and $\infty$ if
$\ell$ is not a part of the route (i.e. $\tau(\ell) = \infty$), and
$\routedistance(\ell_1,\ell_2) = \infty$ if there is no route from
$\ell_1$ to $\ell_2$.

\mypara{Spatio-temporal Time-Series}
A spatio-temporal trace associates each location in a spatial model
with a time-series trace. Formally, a time-series trace $x$ is a
mapping from a time domain $\timedomain$ to some bounded and non-empty
set known as the value domain $\valuedomain$. Given a spatial model
$\spatialmodel = \langle \locations, \wfun \rangle$, a spatio-temporal
trace $\sts$ is a function from $\locations \times \timedomain$ to
$\valuedomain$. We denote the time-series trace at location $\ell$ by
$\sts(\ell)$.

\begin{example}
Consider a spatio-temporal trace $\sts$ of the BSS defined such that
for each location $\ell$ and at any given time $t$, $\sts(\ell,t)$ is
$(\bikes(t),\slots(t))$, where $\bikes(t)$ and $\slots(t)$ are respectively the number of bikes and empty slots at time $t$. 
\end{example}


\subsection{Spatio-Temporal Reach and Escape Logic
(STREL)}\label{sec:strel}

\mypara{Syntax} STREL is a logic that was introduced in
\cite{bartocci2017monitoring} as a formalism for monitoring spatially
distributed cyber-physical systems. STREL extends Signal Temporal
Logic \cite{maler2004monitoring} with two spatial operators, {\em
reach} and {\em escape}, from which is possible to derive other three
spatial modalities: {\em everywhere}, {\em somewhere} and {\em surround}. The syntax of
STREL is given by:
\[ \varphi ::=  \T  \mid 
    \mu \mid  
    \neg \varphi \mid  
    \varphi_{1} \wedge \varphi_{2} \mid  
    \varphi_{1} \: \until{I} \: \varphi_{2} \mid 
    \varphi_{1} \: \reach{\distint} \: \varphi_{2} \mid 
    \escape{\distint}  \: \varphi.  \]
\noindent Here, $\mu$ is an atomic predicate (AP) over the value
domain $\valuedomain$.  Negation $\neg$ and conjunction $\wedge$ are
the standard Boolean connectives, while $\until{I}$ is the temporal
operator {\em until} with $I$ being a non-singular interval over the
time-domain $\timedomain$. The operators $\reach{\distint}$ and
$\escape{\distint}$ are spatial operators where $\distint$ denotes an
interval over the distances induced by the underlying spatial model,
i.e., an interval over $\Reals^{\ge 0}$. 

\mypara{Semantics} A STREL formula is evaluated piecewise over each
location and each time appearing in a given spatio-temporal trace.  We
use the notation $(\sts,\ell) \models \varphi$ if the formula $\varphi$
holds true at location $\ell$ for the given spatio-temporal trace
$\sts$.  The interpretation of atomic predicates, Boolean operations
and temporal operators follows standard semantics for Signal Temporal
Logic: E.g., for a given location $\ell$ and a given time $t$, the
formula $\varphi_1 \until{I} \varphi_2$ holds at $\ell$ iff there is
some time $t'$ in $t \oplus I$ where $\varphi_2$ holds, and for all
times $t''$ in $[t,t')$, $\varphi_1$ holds. Here the $\oplus$ operator
defines the interval obtained by adding $t$ to both interval
end-points. We use standard abbreviations $\ev{I}\varphi =
\mathit{true} \until{I} \varphi$ and $\glob{I}\varphi =
\neg\ev{I}\varphi$, for the {\it eventually} and {\it globally}
operators.  The reachability ($ \reach{\distint}$) and escape
($\escape{\distint}$)operators are spatial operators.  The formula
$\varphi_1 \reach{\distint} \varphi_2$ holds at a location $\ell$ if
there is a route $\route$ starting at $\ell$ that reaches a location
$\ell'$ that satisfies  $\varphi_2$, with a route distance
$\routedistance^\route(\ell')$ that lies in the interval $\distint$,
and for all preceding locations, including $\ell$, $\varphi_1$ holds
true.  The escape formula $\escape{\distint}\varphi$ holds at a
location $\ell$ if there exists a location $\ell'$ at a route distance
$\routedistance(\ell_1,\ell_2)$ that lies in the interval $\distint$
and a route starting at $\ell$ and reaching $\ell'$ consisting of
locations that satisfy $\varphi$.  We define two other operators for
notational convenience: The \textit{somewhere} operator, denoted
$\somewhere{\distint}\varphi$, is defined as $\mathit{true}
\reach{\distint} \varphi$, and the \textit{everywhere} operator,
denoted $\everywhere{\distint}\varphi$ is defined as
$\neg\somewhere{\distint}\neg \varphi$, their meaning is described in
the next example. 


\begin{example} 
\label{ex:bss_strel}
In the BSS, we use atomic predicates $\slots > 0$ and $\bikes > 10$,
and the formula
$\glob{[0,3\mathrm{hours}]}\somewhere{[0,1\mathrm{km}]}(\bikes > 10)$ is
true if always within the next $3$ hours, at a location $\ell$, there is some location $\ell'$ at most
$1$ km from $\ell$ where, the number of bikes
available exceed $10$.  Similarly, the formula
$\everywhere{[0,1\mathrm{km}]}\glob{[0,30\mathrm{min}]}(\slots > 0)$
is true at a location $\ell$ if for all locations within $1$km, for
the next $30$ mins, there is no empty slot.  
\end{example}




\section{Constructing a Spatial Model}
\label{sec:graph}
\begin{figure}[t]
\centering
%
\subfloat[\label{fig:full_graph}]{%
\includegraphics[width=.249\textwidth]{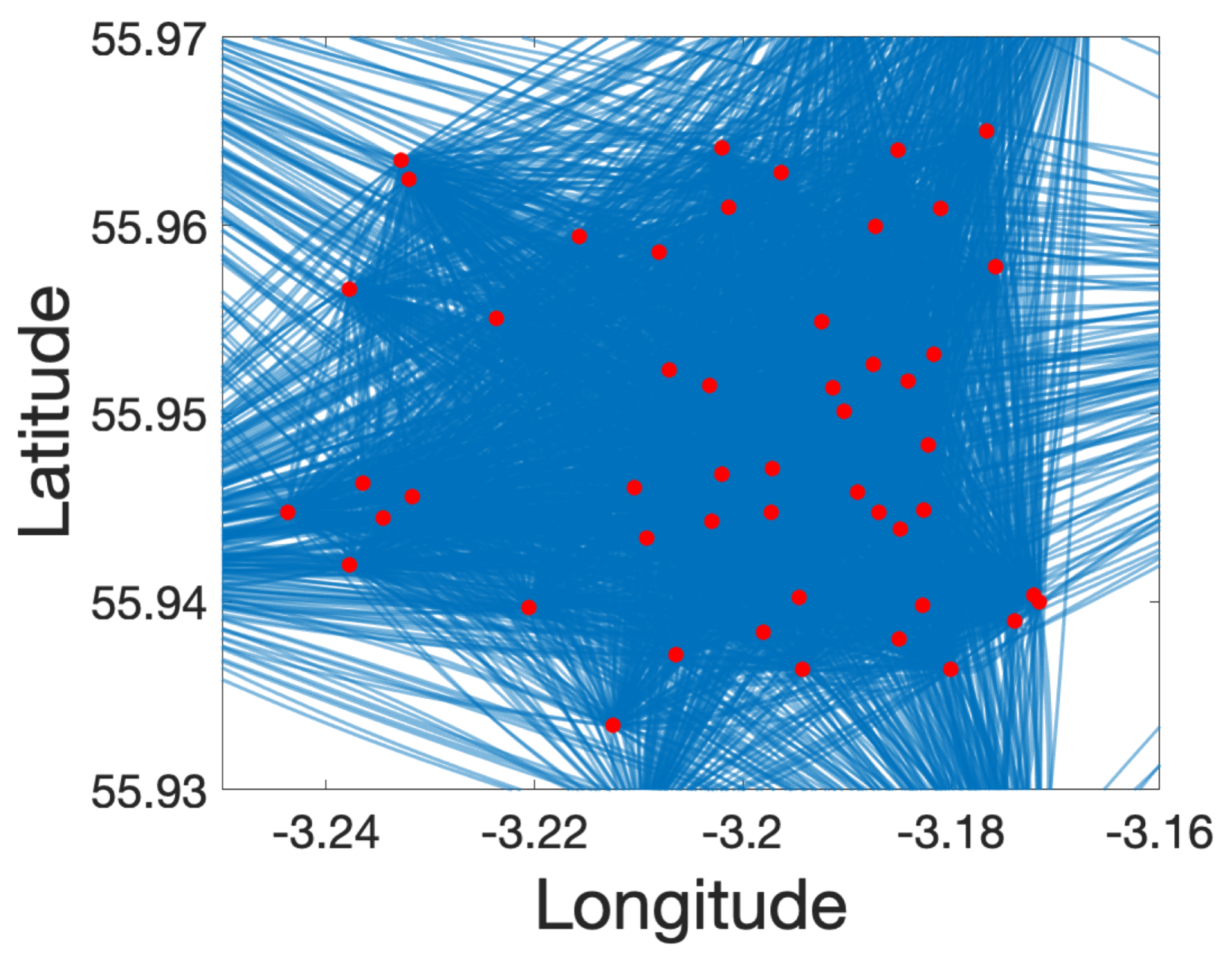} %
} %
\subfloat[\label{fig:delta_graph}]{%
\includegraphics[width=.249\textwidth]{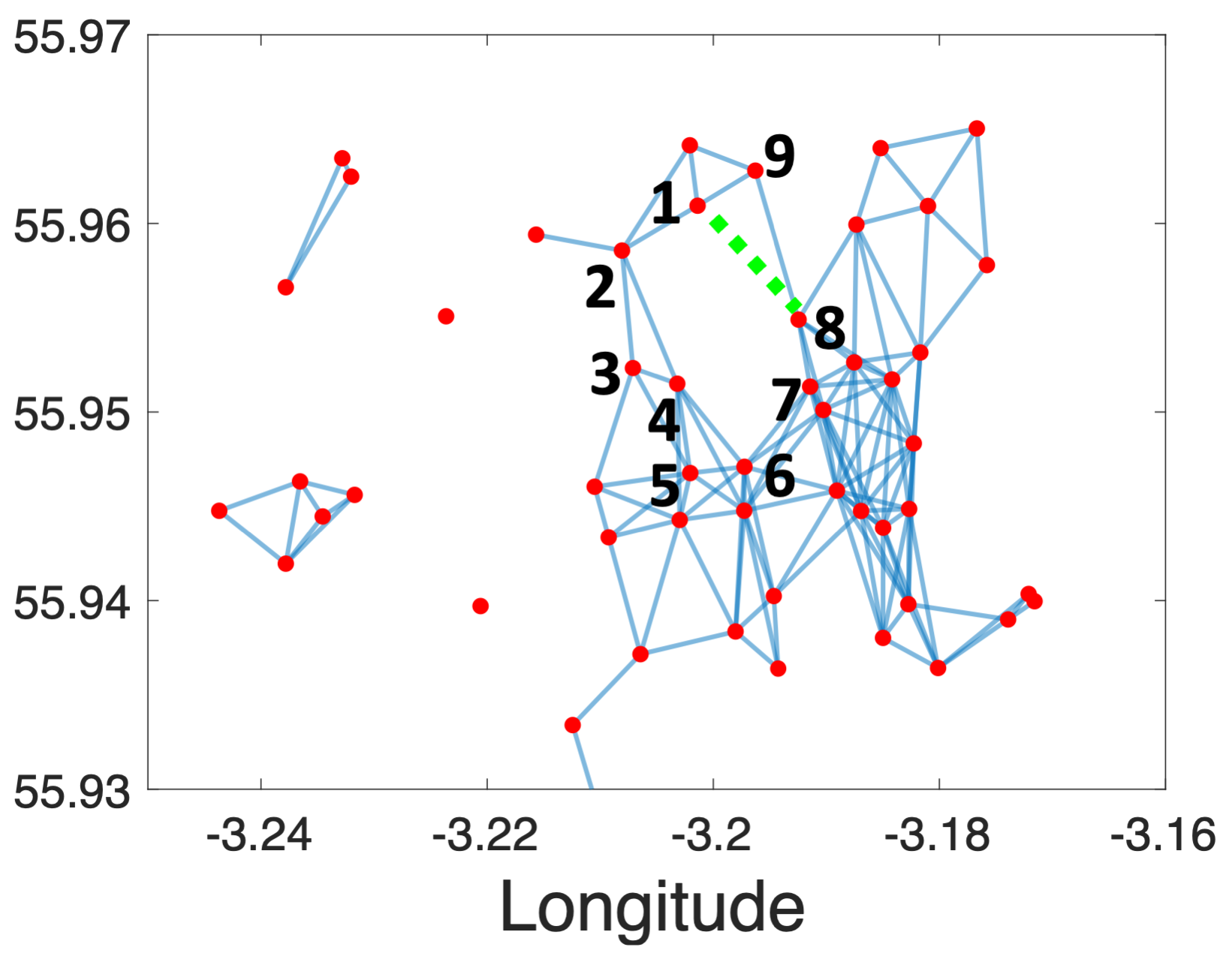} %
} %
\subfloat[\label{fig:mst_gprah}]{%
\includegraphics[width=.249\textwidth]{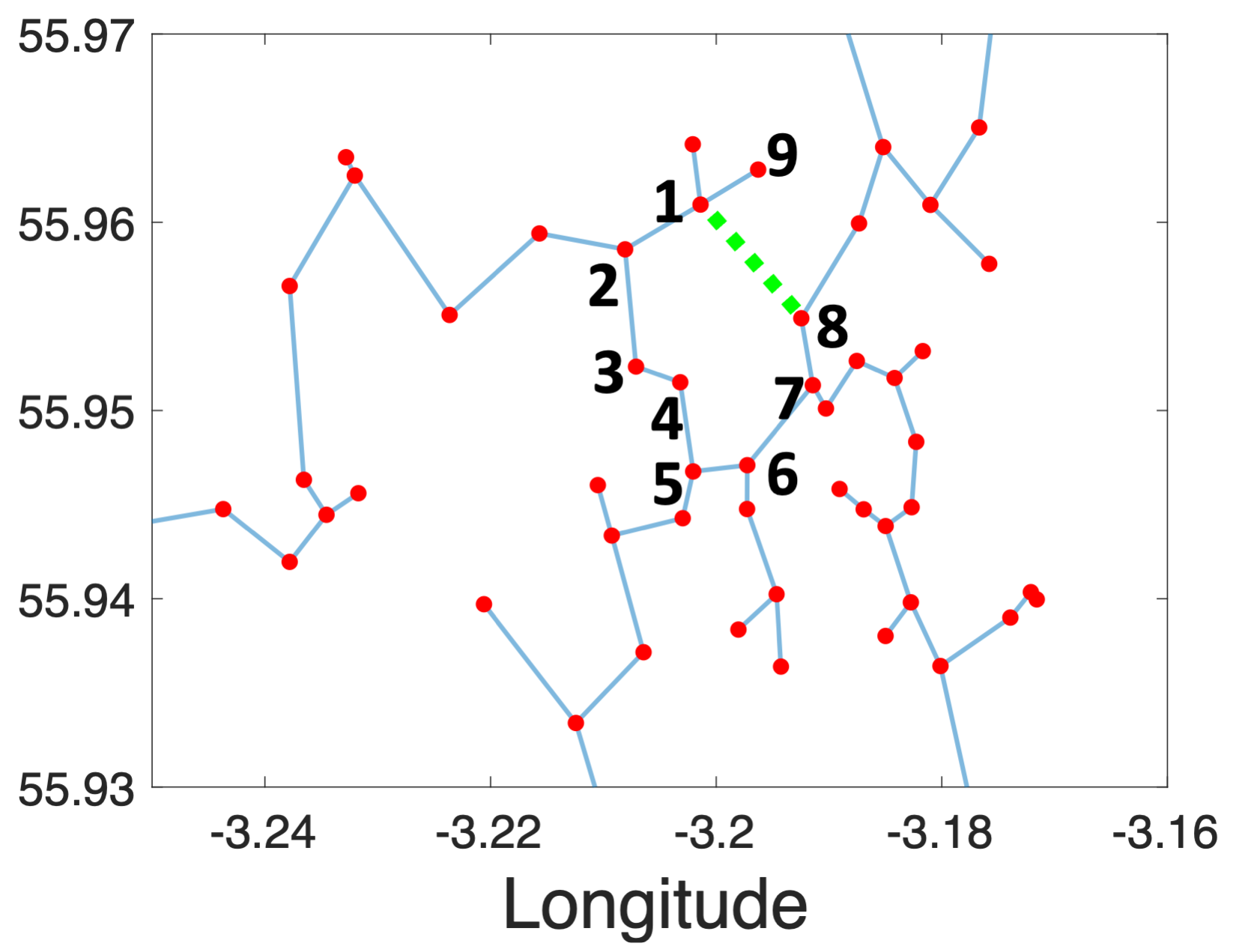}%
} %
\subfloat[\label{fig:enhanced_mst}]{%
\includegraphics[width=.249\textwidth]{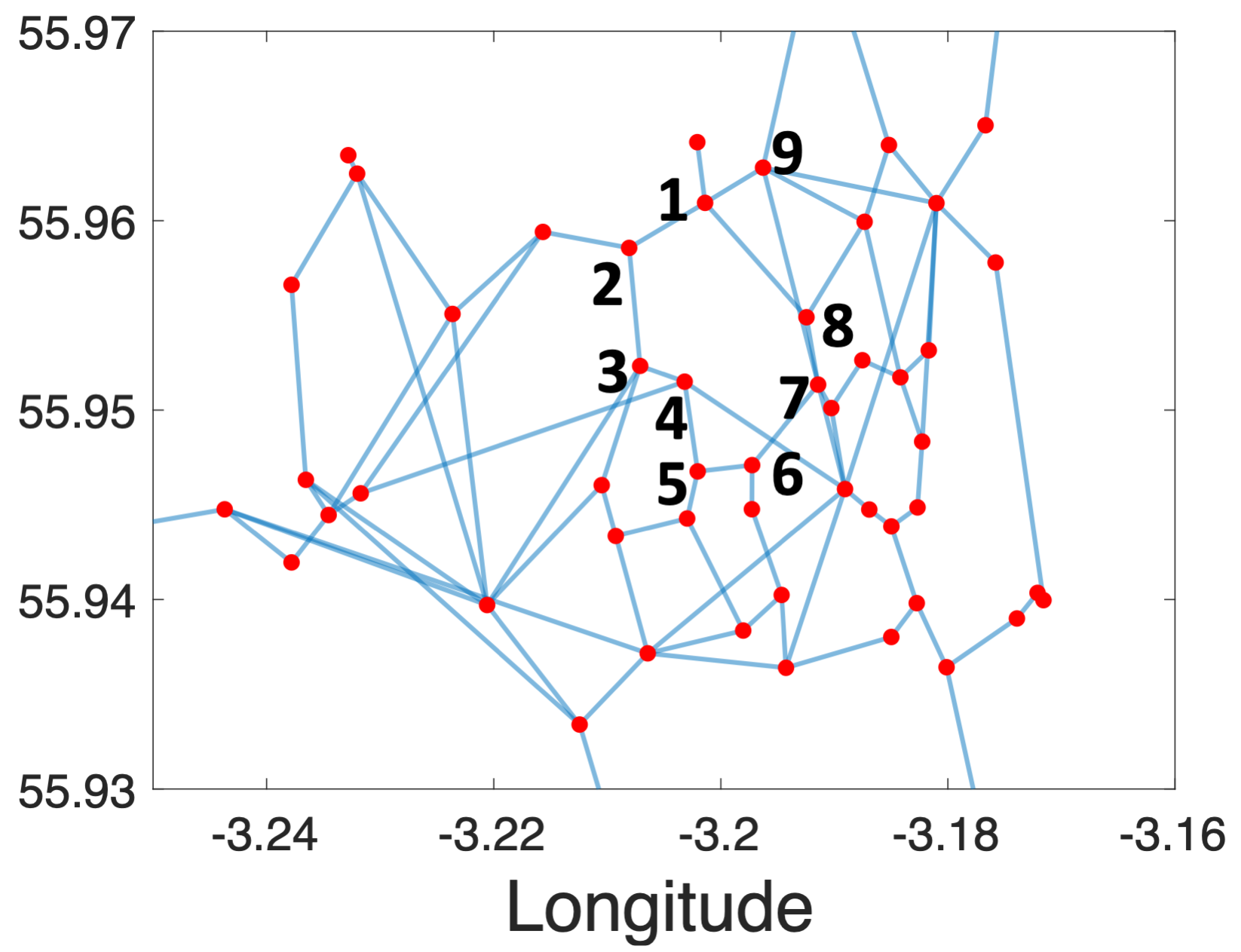}%
}
\caption{Different approaches for constructing the spatial model for
the BSS.  (a) shows an $(\infty,\dhaver)$-connectivity spatial model
where $\dhaver$ is the Haversine distance between locations. (b) shows
a $(\delta,\dhaver)$-connectivity spatial model where $\delta=1km$.
Observe that the spatial model is disconnected. (c) shows an
MST-spatial model. (d) shows an $(\alpha,\dhaver)$ enhanced MSG
spatial model with $\alpha=2$. Observe that this spatial model is
sparse compared even to the $(\delta,\dhaver)$-connectivity spatial
model.}
\label{fig:graph}
\end{figure}


In this section, we present four approaches to construct a
spatial model, and discuss the pros and cons of each approach.

\begin{enumerate}[align=left, leftmargin=0pt, labelindent=0em, listparindent=\parindent, labelwidth=0pt, itemindent=!]

\item {\bf $(\infty,d)$-connectivity spatial model}: This spatial model
corresponds to the $(\delta,d)$-connectivity spatial model as
presented in Definition~\ref{def:deltaconngraph}, where we set $\delta =
\infty$. We note that this gives us a fully connected graph, i.e.
where $|\wfun|$ is $O(|\locations|^2)$. We remark that our learning
algorithm uses monitoring STREL formulas as a sub-routine, and from
Lemma~\ref{lem:monitoring_complexity} in Appendix, we can see that as the
complexity of monitoring a STREL formula is linear in $|\wfun|$, a
fully connected graph is undesirable.

\item {\bf $(\delta,d)$-connectivity spatial model}: This is the model
presented in Definition~\ref{def:deltaconngraph}, where $\delta$ is
heuristically chosen in an application-dependent fashion. Typically,
the $\delta$ we choose is much smaller compared to the distance
between the furthest nodes in the given topological space.  This gives
us $\wfun$ that is sparse, and thus with a lower monitoring cost;
however, a small $\delta$ can lead to a disconnected spatial model
which can affect the accuracy of the learned STREL formulas.
Furthermore, this approach may overestimate the spatial model induced
distance between two nodes (as in Definition~\ref{eq:routdist}) that
are not connected by a direct edge.  For instance, in
Fig.~\ref{fig:delta_graph}, nodes $1$ and $8$ are connected through
the route $1 \rightarrow 9 \rightarrow 8$, and sum of
the edge-weights along this route is larger than the actual (metric)
distance of $1$ and $8$. 

\item {\bf MST-spatial model}: To minimize the number of edges in the
graph while keeping the connectivity of the graph, we can use
Minimum Spanning Tree (MST) as illustrated in
Fig.~\ref{fig:mst_gprah}. This gives us $|\wfun|$ that is
$O(|\locations|)$, which makes monitoring much faster, while resolving
the issue of disconnected nodes in the $(\delta,d)$-spatial model.
However, an MST can also lead to an overestimate of the spatial model
induced distance between some nodes in the graph. For example, in
Fig.~\ref{fig:mst_gprah}, the direct distance between nodes $1$ and
$8$ is much smaller than their route distance (through the route $1
\rightarrow 2 \rightarrow 3 \rightarrow 4 \rightarrow 5 \rightarrow
6 \rightarrow 7 \rightarrow
8$). 

\item {\bf $(\alpha,d)$-Enhanced MSG Spatial Model}: To address the
shortcomings of previous approaches, we propose constructing a spatial
model that we call the $(\alpha,d)$-{\em Enhanced Minimum Spanning
Graph} Spatial model.  First, we construct an MST over the given set
of locations and use it to define $\wfun$ and pick $\alpha$ as some
number greater than $1$.  Then, for each distinct pair of locations
$\ell_1,\ell_2$, we compute the shortest route distance
$\routedistance(\ell_1,\ell_2)$ between them in the constructed MST,
and compare it to their distance $d(\ell_1,\ell_2)$ in the metric
space. If $\routedistance(\ell_1,\ell_2) > \alpha \cdot
d(\ell_1,\ell_2)$, then we add an edge $(\ell_1, d(\ell_1,\ell_2),
\ell_2)$ to $\wfun$.  The resulting spatial model is no longer a tree,
but typically is still sparse. The
complete algorithm, is provided in Algo.~\ref{alg:graph}, which is a simple way of constructing an
$(\alpha,d)$-enhanced MSG spatial model, and incurs a one-time cost of
$O(|\locations|^2 \cdot (|\locations|+|\wfun| \cdot
\log(|\locations|)))$. We believe that the time complexity can be
further improved using a suitable dynamic programming based
approach.  In our case studies, the cost of building the enhanced
MSG spatial model was insignificant compared to the other steps in the
learning procedure. The runtimes of our learning approach for
different kinds of spatial models on various case studies is
illustrated in Table.~\ref{tab:graph}. We also represent information about the number of isolated nodes in each graph. We consider the run-time greater than 30 minutes as time-out. The results demonstrate that  $(\infty,\dhaver)$-connectivity spatial model usually results in time-out because of the large number of edges. While $(\delta,\dhaver)$-connectivity spatial model results in a better run-time, it has the problem of isolated nodes or dis-connectivity. MST-spatial model neither results in time-out or dis-connectivity; however, it has the problem of overestimating distances between nodes. While our approach, $(\alpha,\dhaver)$-Enhanced MSG Spatial Model, has a slightly worse run-time compared to MST-spatial model, it improves the distance over-approximation issue.
\end{enumerate}

\begin{table}[ht]
\centering
\begin{tabular*}{.95\textwidth}{@{\extracolsep{\fill}}lllll}
\toprule
Case &  $(\infty,\dhaver)-$   & $(\delta,\dhaver)-$ & MST & $(\alpha,\dhaver)-$\\
&connectivity&connectivity & &Enhanced MSG \\
\midrule
COVID-19&time-out,0&1007.44s,75&600.89s,0&813.65s,0 \\
BSS&time-out,0&934.41s,17&519.30s,0&681.78s,0 \\
Air Quality&time-out,0&111.36s,46&119.94s,0&136.02s,0 \\
Food Court &170.62s,0&84.25s,0&73.53s,0&78.24s,0  \\ [1ex]
\bottomrule
\end{tabular*}
\caption{Run time of the learning algorithm (seconds), number of isolated nodes in the spatial model . , . (threshold for time-out is set to 30 minutes).\label{tab:graph}}
\end{table}

\begin{algorithm}[t]
\DontPrintSemicolon
\KwIn{A set of locations $\locations$  (vertices of the graph), Longitudes and Latitudes of the locations, factor $\alpha > 1$}
\KwOut{$\spatialmodel$} 
    $\spatialmodel = minSpanningTree(\locations, \mathsf{Longitudes}, \mathsf{Latitudes})${\color{teal}\tcp{Prim's algorithm}} \;
    \For{$i \leftarrow 1$ to $|\locations|$}{
        \For{$j \leftarrow i+1$ to $|\locations|$}{
            {\color{teal}\tcp{Length of the shortest path between i and j}}
            $shortestPathMST_{ij} = length(shortestPath(\spatialmodel,i,j))$\\
            {\color{teal}\tcp{Compute the Haversine distance between i and j}}
            $\mathsf{directDistance}_{ij} = \dhaver(\mathsf{longitudes}[i,j], \mathsf{latitudes}[i,j])$\\
            \lIf{$shortestPathMST_{ij} > \alpha \cdot directDistance_{ij}$}{
            $addEdge(\spatialmodel, i, j)$
        } 
    }
}
\Return $\spatialmodel$
\caption{Algorithm to create an $(\alpha,\dhaver)$-Enhanced MSG Spatial Model \label{alg:graph}}
\end{algorithm}

\section{Learning STREL formulas from data}
In this section, we first introduce Parametric Spatio-Temporal Reach
and Escape Logic (PSTREL) and the notion of monotonicity for PSTREL
formulas. Then, we introduce a projection function $\pi$ that maps a
spatio-temporal trace to a valuation in the parameter space of a given
PSTREL formula. We then cluster the trace-projections using
Agglomerative Hierarchical Clustering, and finally learn a compact
STREL formula for each cluster using Decision Tree techniques.

\mypara{Parametric STREL (PSTREL)}
Parametric STREL (PSTREL) is a logic obtained by replacing one or more
numeric constants appearing in STREL formulas by parameters;
parameters appearing in atomic predicates are called {\em magnitude}
parameters $\valueparams$, and those appearing in temporal and spatial
operators are called {\em timing} $\timeparams$ and {\em spatial}
parameters $\spaceparams$ respectively. Each parameter in
$\valueparams$ take values from $\valuedomain$, those in
$\timeparams$ take values from $\timedomain$, and those in
$\spaceparams$ take values from $\Reals^{\ge 0}$ (i.e. the set of
values that the $\routedistance$ metric can take for a given spatial
model). We define a valuation function $\val$ that maps all parameters
in a PSTREL formula to their respective values. 

\begin{example}
Consider the PSTREL versions of the STREL formulas introduced in
Example~\ref{ex:bss_strel} $\varphi(\param_\tau,\param_d,\param_c)$ = 
$\mathbf{G}_{[0,\param_\tau]}\somewhere{[0,\param_d]}(\bikes > \param_c)$.
The valuation $\val$: $\param_\tau
\mapsto 3\mathrm{hours}$, $\param_d \mapsto 1\mathrm{km}$, and $\param_c \mapsto 10$ returns the STREL
formula introduced in Example~\ref{ex:bss_strel}. 
\end{example}

\begin{definition}[Parameter Polarity, Monotonic PSTREL]
A polarity function $\polarity$ maps a parameter to an element of $\{+,-\}$, and
is defined as follows:
\vspace{1mm}

{\centering
  $ \displaystyle
    \begin{aligned} 
\polarity(\param)& = +\ \ \eqdef\ \ 
\val'(\param) > \val(\param) \wedge (\sts,\ell) \models \varphi(\val(\param))  
\Rightarrow (\sts,\ell) \models \varphi(\val'(\param)) \\
\polarity(\param)& = -\ \ \eqdef\ \ 
\val'(\param) < \val(\param) \wedge (\sts,\ell) \models \varphi(\val(\param))  
\Rightarrow (\sts,\ell) \models \varphi(\val'(\param)) \\
    \end{aligned}
  $ 
\par}
\vspace{2mm}
The monotonic fragment of PSTREL consists of PSTREL formulas where all
parameters have either positive or negative polarity.
\end{definition}
In simple terms, the polarity of a parameter $\param$ is positive if
it is easier to satisfy $\varphi$ as we increase the value of $\param$
and is negative if it is easier to satisfy $\varphi$ as we decrease
the value of $\param$.  The notion of polarity for PSTL formulas was
introduced in \cite{asarin2011parametric}, and we extend this to
PSTREL and spatial operators.  The polarity for PSTREL formulas
$\varphi(d_1,d_2)$ of the form $\somewhere{[d_1,d_2]}\psi$,
$\psi_1\reach{[d_1,d_2]}\psi_2$, and $\escape{[d_1,d_2]}\psi$ are
$\polarity(d_1) = -$ and $\polarity(d_2) = +$, i.e. if a
spatio-temporal trace satisfies $\varphi(\val(d_1),\val(d_2))$, then
it also satisfies any STREL formula over a strictly larger spatial
model induced distance interval, i.e. by decreasing $\val(d_1)$ and
increasing $\val(d_2)$.  For a formula $\everywhere{[d_1,d_2]}\psi$,
$\polarity(d_1) = +$ and $\polarity(d_2) = -$, i.e. the formula
obtained by strictly shrinking the distance interval. The proofs are
simple, and provided in Appendix for completeness.
%

\begin{definition}[Validity Domain, Boundary]
Let $\paramspace = \valuedomain^{|\valueparams|} \times \timedomain^{|
\timeparams|}\times (\Reals^{\ge 0})^{|\spaceparams|}$ denote the
space of parameter valuations, then the validity domain
$\validitydomain$ of a PSTREL formula at a location $\ell$ with
respect to a set of spatio-temporal traces $\traces$ is defined as
follows: 
\(
\validitydomain(\varphi(\param),\ell,\traces)  =
\{ \val(\param) \mid \param \in \paramspace, \sts \in \traces,
(\sts,\ell) \models \varphi(\val(\param)) \} \)
The validity domain boundary
$\validitydomainboundary(\varphi(\varphi),\ell,\traces)$ is defined as the
intersection of $\validitydomain(\varphi,\ell,\traces)$ with the closure of
its complement.
\end{definition}

\mypara{Spatio-Temporal Trace Projection}
We now explain how a monotonic PSTREL formula $\varphi(\param)$ can be
used to automatically extract features from a spatio-temporal trace.
The main idea is to define a total order $>_\params$ on the parameters
$\param$ (i.e. parameter priorities) that allows us to define a
lexicographic projection of the spatio-temporal trace $\sts$ at each
location $\ell$ to a parameter valuation $\val(\param)$ (this is
similar to assumptions made in
\cite{jin2015mining,vazquez2017logical}). We briefly remark how we can
relax this assumption later. Let $\val_j$ denote the valuation of the
$j^{th}$ parameter.  \begin{definition}[Parameter Space Ordering,
Projection] A total order on parameter indices $j_1 > \ldots > j_n$
imposes a total order $\preclex$ on the parameter space defined as:
\[
\val(\param) \preclex \val'(\param) \Leftrightarrow \exists j_k 
\text{ s.t. }\left\{\begin{array}{l} 
\polarity(\param_{j_k}) = + \Rightarrow \val_{j_k} < \val'_{j_k} \\
\polarity(\param_{j_k}) = - \Rightarrow \val_{j_k} > \val'_{j_k} 
\end{array}\right.\, \mathrm{and}\ \forall m <_\params k, \val_m =
\val'_m.
\]
Given above total order, $\projlex(\sts,\ell) = \inf_{\preclex}
\{\val(\param) \in \validitydomainboundary(\varphi(\param),\{\sts\}\}$.
\end{definition}
In simple terms, given a total order on the parameters, the
lexicographic projection maps a spatio-temporal trace to valuations
that are least permissive w.r.t. the parameter with the greatest
priority, then among those valuations, to those that are least
permissive w.r.t. the parameter with the next greater priority, and so
on. Finding a lexicographic projection can be done by sequentially
performing binary search on each parameter dimension
\cite{vazquez2017logical}. It is easy to show that $\projlex$ returns
a valuation on the validity domain boundary. The method for finding the lexicographic projection is formalized in
Algo.~\ref{alg:bisection_search}. The algorithm begins by setting the
lower and upper bounds of valuations for each parameter. Then, each
parameter is set to a parameter valuation that results in the most
permissive STREL formula (based on the monotonicity direction). Next,
for each parameter in the defined order $>_\params$ we perform bisection
search to learn a tight satisfying parameter valuation.  After
completion of bisection search, we return the upper (lower) bound of
the search interval for parameters with positive (negative) polarity.

\begin{algorithm}[t]
\KwIn{A trace $\sigma(\ell)$, a spatial model $\mathcal{S}$, a PSTREL formula $\varphi(\param)$, a parameter set $\mathcal{P}$, monotonicity directions $\gamma(\param)$, defined order on parameters $>_\params$, $\mathbf{\delta} > 0$}
{\color{teal}\tcp{$\projlex(\sts,\ell)$ is the projection of $\sts(\ell)$ to a point in the parameter space of $\varphi$}}
\KwOut{$\projlex(\sts,\ell)$} 

{\color{teal}\tcp{Lower and upper bounds of each parameter}}
$\val^l(\mathbf{p}) \leftarrow \mathbf{inf} (\mathcal{P})$,  $\val^u(\mathbf{p}) \leftarrow \mathbf{sup} (\mathcal{P})$ \;

{\color{teal}\tcp{Initialize each parameter with a value in the parameter space that results in the most permissive formula (based on the monotonicity direction of each parameter)}}
\For{$i \leftarrow 1$ to $|\mathcal{P}|$}{
\lIf{$\gamma(\param_i) == +$}{$\val(\param_i) \leftarrow \val^u(\param_i)$}
\lElse{$\val(\param_i) \leftarrow \val^l(\param_i)$}
}
{\color{teal}\tcp{Optimize each parameter in the defined order $>_\params$}}

\For{$i \leftarrow 1$ to $|\params_>|$}{
    \While{$|\val^u(\param_i) - \val^l(\param_i)| \geq \delta_i$}{
        $\val(\param_i)= \frac{1}{2}(\val^l(\param_i) + \val^u(\param_i))$ {\color{teal}\tcp{The middle point}} \;
        {\color{teal}\tcp{Compute robustness of the middle point}}
        $\rho = \rho(\varphi(\val(\param_i)), \mathcal{S}, \sigma, \ell, 0)$;\\
        \lIf{$\rho \geq 0 \;\&\; \gamma(\param_i) == +$}{
            $\val^u(\param_i) \leftarrow \val(\param_i)$
        }
        \lElseIf{$\rho \geq 0 \;\&\; \gamma(\param_i) == -$}{
            $\val^l(\param_i) \leftarrow \val(\param_i)$
        }
        \lElseIf{$\rho < 0 \;\&\; \gamma(\param_i) == +$}{
            $\val^l(\param_i) \leftarrow \val(\param_i)$
        }
        \lElse{
            $\val^u(\param_i) \leftarrow \val(\param_i)$
        }
    }
    \lIf{$\gamma(p_i) == +$}{$\val_{final}(\param_i) \leftarrow v^u(\param_i)$}
    \lElse{$\val_{final}(\param_i) \leftarrow \val^l(\param_i)$ \;}
}
\Return $\projlex(\sts,\ell) \leftarrow \val_{final}(\param)$
\caption{Lexicographic projection of spatio-temporal traces using multi-dimensional bisection search \label{alg:bisection_search}}
\end{algorithm}

\begin{remark}
The order of parameters is assumed to be provided by the user and is
important as it affects the unsupervised learning algorithms for
clustering that we apply next. Intuitively, the order corresponds to
what the user deems as more important. For example, consider the
formula $\mathbf{G}_{[0,3\mathrm{hours}]}\somewhere{[0,d]} (\bikes > c)$. Note
that $\polarity(d) = +$, and $\polarity(c) = -$. Now if the user is
more interested in the radius around each station where the number of
bikes exceeds some threshold (possibly $0$) within $3$hours, then the
order is $d >_\params c$. If she is more interested in knowing what is
the largest number of bikes available in any radius (possibly
$\infty$) always within $3$hours, then $c >_\params d$. 
\end{remark}

\begin{remark} Similar to \cite{vazquez2018time}, we can compute an
approximation of the validity domain boundary for a given trace, and then apply a clustering algorithm on the
validity domain boundaries.  This does not require the user to specify
parameter priorities. In all our case studies, the parameter
priorities were clear from the domain knowledge, and hence we will
investigate this extension in the future.
\end{remark}

%

\mypara{Clustering}
The projection operator $\projlex(\sts,\ell)$ maps each location to a
valuation in the parameter space. These valuation points serve as
features for off-the-shelf clustering algorithms. In our experiments,
we use the {\em Agglomerative Hierarchical Clustering} (AHC) technique
\cite{day1984efficient}
to automatically cluster similar valuations.  AHC is a bottom-up
approach that starts by assigning each point to a single cluster, and
then merging clusters in a hierarchical manner based on a similarity
criteria\footnote{We used complete-linkage criteria which assumes the distance between clusters equals the distance between those two elements (one in each cluster) that are farthest away from each other.}. An
important hyperparameter for any clustering algorithm is the number of
clusters to choose. In some case studies, we use domain knowledge to
decide the number of clusters. Where such knowledge is not available,
we use the {\em Silhouette metric} to compute the optimal number of
clusters. Silhouette is a ML method to interpret and
validate consistency within clusters by measuring how well each point
has been clustered. The silhouette metric ranges from $-1$ to $+1$,
where a high silhouette value indicates that the object is well
matched to its own cluster and poorly matched to neighboring clusters
\cite{rousseeuw1987silhouettes}. 

\begin{example}
Fig.~\ref{fig:bss_clustering_runex} shows the results of projecting
the spatio-temporal traces from BSS through the PSTREL formula $\varphi(\tau,d)$
shown in Eq.~\eqref{eq:bss_formula}.
\begin{equation}
\varphi(\tau, d) = \glob{[0,3]}(\fwait(\tau) \vee \fwalk(d)) \label{eq:bss_formula} 
\end{equation}
In the above formula, $\fwait(\tau)$ is defined as
$\ev{[0,\tau]}(\bikes \geq 1) \wedge (\ev{[0,\tau]} \slots \geq 1)$,
and $\fwalk(d)$ is $\somewhere{[0,d]} (\bikes \geq 1) \wedge
\somewhere(\slots \geq 1)$.  $\varphi(\tau,d)$ means that for the next
3 hours, either $\fwait(\tau)$ or $\fwalk(d)$ is true. Locations with
large values of $\tau$ have long wait times or with large $d$ values
are typically far from a location with bike/slot availability (and are thus
undesirable).  Locations with small $\tau,d$ are desirable.  Each
point in Fig.~\ref{fig:bss_clustering_runex} shows
$\projlex(\sts,\ell)$ applied to each location and the result of
applying AHC with $3$ clusters.
\end{example}

Let $\numclusters$ be the number of clusters obtained after applying
AHC to the parameter valuations.  Let $\clusterlabel$ denote the
labeling function mapping $\projlex(\sts,\ell)$ to
$\{1,\ldots,\numclusters\}$.  The next step after clustering is to
represent each cluster in terms of an easily interpretable STREL
formula. Next, we propose a decision tree-based approach to learn an
interpretable STREL formula from each cluster.

\mypara{Learning STREL Formulas from Clusters}
\label{subsec:learn_strel}
The main goal of this subsection is to obtain a compact STREL formula
to describe each cluster identified by AHC. We argue that bounded
length formulas tend to be human-interpretable, and show how we can
automatically obtain such formulas using a decision-tree approach.
Decision-trees (DTs) are a non-parametric supervised learning method
used for classification and regression\cite{Mitchell:1997:ML:541177}.
Given a finite set of points $X \subseteq \Reals^m$ and a labeling
function $\lfun$ that maps each point $x \in X$ to some label
$\lfun(x)$, the DT learning algorithm creates a tree whose non-leaf
nodes $n_j$ are annotated with constraints $\phi_j$, and each leaf
node is associated with some label in the range of $\lfun$.  Each path
$n_1,\ldots,n_i,n_{i+1}$ from the root node to a leaf node corresponds
to a conjunction $\bigwedge_{j=1}^{i} h_j$, where $h_j$ = $\neg
\phi_j$ if $h_{j+1}$ is the left child of $h_j$ and $\phi_j$
otherwise. Each label thus corresponds to the disjunction over the
conjunctions corresponding to each path from the root node to the leaf
node with that label. 

Recall that after applying the AHC procedure, we get one valuation
$\projlex(\sts,\ell)$ for each location, and its associated cluster
label. We apply a DT learning algorithm to each point
$\projlex(\sts,\ell)$, and each DT node is associated with a $\phi_j$
of the form $p_j \ge v_j$ for some $p_j \in \param$. 

\begin{lemma}
Any path in the DT corresponds to a STREL formula of length that is
$O((|\params| + 1)\cdot|\varphi|)$.
\end{lemma}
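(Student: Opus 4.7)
The plan is to fix a root-to-leaf path in the decision tree and argue in two stages: first, reduce the conjunction of literals along the path to at most $O(|\params|)$ ``essential'' constraints on the projected valuation $\projlex(\sts,\ell)$; second, translate each such constraint into a bounded number of ground STREL formulas of size $|\varphi|$ using the monotonicity of PSTREL formulas.

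Concretely, a path in the DT corresponds to a conjunction $\bigwedge_{j} h_j$ where each $h_j$ is either $p_{i_j} \ge v_j$ or its negation $p_{i_j} < v_j$, as noted in the paragraph preceding the lemma. The first step is simplification: along a single path, all literals involving the same parameter $p_i$ can be consolidated into a single interval constraint $v_i^l \le p_i < v_i^u$ (possibly with $v_i^l = -\infty$ or $v_i^u = +\infty$). Thus the path is logically equivalent to a hyperbox constraint on $\projlex(\sts,\ell)$ using at most one lower and one upper bound per parameter, i.e. at most $2|\params|$ essential literals.

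Next I would translate each per-parameter constraint into a STREL formula via the monotonicity of $\varphi(\param)$. For a parameter $p_i$ with $\polarity(p_i) = +$, the constraint $\projlex(\sts,\ell)_i \le v_i^u$ is equivalent to $(\sts,\ell) \models \varphi(\val^u)$ where $\val^u$ assigns $v_i^u$ to $p_i$ and the most permissive values (determined by polarity) to the other parameters; the lower-bound constraint $\projlex(\sts,\ell)_i \ge v_i^l$ is analogously equivalent to $(\sts,\ell) \not\models \varphi(\val^l)$. Each such $\varphi(\val)$ is a ground instance of the PSTREL template and therefore has length $|\varphi|$. Conjoining and negating the resulting $O(|\params|)$ instances yields a STREL formula of total length $O((|\params|+1)\cdot|\varphi|)$, where the extra additive term accounts for the Boolean connectives plus a single outermost formula copy needed to express the ``inside the validity domain'' direction.

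The main obstacle I anticipate is the lexicographic nature of $\projlex$: unlike a componentwise projection, the projected value of $p_i$ depends on the already-fixed values of the higher-priority parameters, so the naive claim ``$\projlex(\sts,\ell)_i \le v_i^u$ iff $\varphi$ holds with $p_i := v_i^u$'' must be justified coordinate by coordinate, using that the projection lies on $\validitydomainboundary$ and that $\varphi$ is monotone in each argument separately. I would handle this by induction on the priority order $>_\params$, showing that once the top $k-1$ parameters are fixed at $\projlex(\sts,\ell)$'s values, the $k$-th coordinate behaves like the single-parameter case, at which point the monotonicity-based translation applies directly.
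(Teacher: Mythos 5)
Your proof follows essentially the same route as the paper's: consolidate the path's literals into at most one lower and one upper bound per parameter (an axis-aligned hyperbox), then use monotonicity to turn each bound into a ground instance of $\varphi(\param)$ or its negation, giving $O(|\params|)$ conjuncts of length $|\varphi|$ each. The paper's only refinements are cosmetic --- it packages all the permissive-side bounds into the single formula at the most permissive vertex of the hyperbox, yielding $|\params|+1$ conjuncts rather than your $2|\params|$ (same big-$O$), and it defers the hyperbox-to-formula correspondence, including the lexicographic-projection subtlety you rightly flag, to the cited prior work rather than arguing it coordinate by coordinate.
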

\begin{proof}

Any path in the DT is a conjunction over a number of formulas of the
kind $p_j \ge v_j$ or its negation. Because $\varphi(\param)$ is
monotonic in each of its parameters, if we are given a conjunction of
two conjuncts of the type $p_j \ge v_j$ and $p_j \ge v'_j$, then
depending on $\polarity(p_j)$, one inequality implies the other, and
we can discard the weaker inequality. Repeating this procedure, for
each parameter, we will be left with at most $2$ inequalities (one
specifying a lower limit and the other an upper limit on $p_j$). Thus,
each path in the DT corresponds to an axis-aligned hyperbox in the
parameter space. Due to monotonicity, an axis-aligned hyperbox in the
parameter space can be represented by a formula that is a conjunction
of $|\params|+1$ STREL formulas (negations of formulas corresponding
to the $|\params|$ vertices connected to the vertex with the most
permissive STREL formula, and the most permissive formula itself)
\cite{vazquez2017logical} (see Fig.~\ref{fig:hyper_box} for an example
in a 2D parameter space). Thus, each path in the DT can be described
by a formula of length $O((|\params|+1)\cdot|\varphi|)$, where
$|\varphi|$ is the length of $\varphi$.
\end{proof}

\begin{example}
The result of applying the DT algorithm to the clusters identified by
AHC (shown in dotted lines in Fig.~\ref{fig:bss_clustering_runex}) is
shown as the axis-aligned hyperboxes. Using the meaning of
$\varphi(\tau,d)$ as defined in Eq.~\eqref{eq:bss_formula}, we learn
the formula $\neg\varphi(17.09,2100) \wedge \neg \varphi(50,1000.98)
\wedge \varphi(50,2100)$ for the red cluster. The last of these conjuncts is essentially
the formula $\true$, as this formula corresponds to the most
permissive formula over the given parameter space. Thus, the formula
we learn is:
\[ \varphi_{red} = \neg \glob{[0,3]}(\fwait(17.09) \vee \fwalk(2100))
            \wedge \neg \glob{[0,3]}(\fwait(50) \vee \fwalk(1000.98))
\]
The first of these conjuncts is associated with a short wait time and
the second is associated with short walking distance. As both are not
satisfied, these locations are the least desirable.
\end{example}




\mypara{Pruning the Decision Tree} If the decision tree algorithm
produces several disjuncts for a given label (e.g., see
Fig.~\ref{fig:covid_exampl1_clustering}), then it can significantly
increase the length and complexity of the formula that we learn for a
label. This typically happens when the clusters produced by AHC are
not clearly separable using axis-aligned hyperplanes. We can mitigate
this by pruning the decision tree to a maximum depth, and in the
process losing the bijective mapping between cluster labels and small
STREL formulas. We can still recover an STREL formula that is
satisfied by most points in a cluster using a $k$-fold cross
validation approach (see Algo.~\ref{alg:dt_prune}). The idea is to loop over
the maximum depth permitted from $1$ to $N$, where $N$ is user
provided, and for each depth performing $k$-fold cross validation to
characterize the accuracy of classification at that depth.  If the
accuracy is greater than a threshold ($90\%$ in our experiments), we
stop and return the depth as a limit for the decision tree.
Fig.~\ref{fig:covid_exampl1_pruned} illustrates the hyper-boxes
obtained using this approach. For this example, we could decrease the
number of hyper-boxes from 11 to 3 by miss-classifying only a few data
points (less than $10\%$ of the data). 

\begin{figure}[t]
\centering
\subfloat[$\varphi_{yellow} = \varphi(\tau_2,d_2) \wedge \neg \varphi_{pink} \wedge \neg \varphi_{blue}$.\label{fig:hyper_box}]{\includegraphics[width = .4\textwidth]{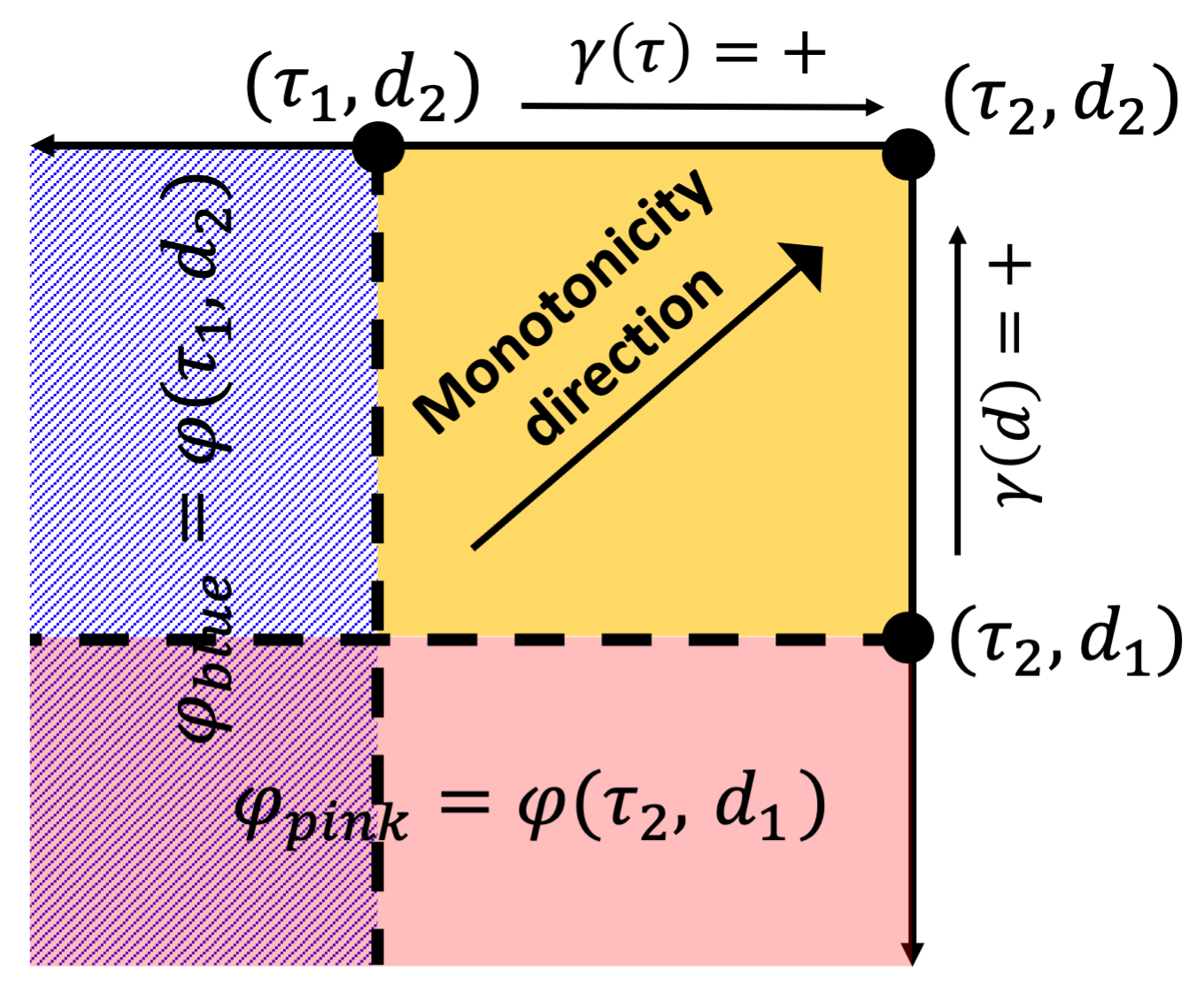}}
\ 
\subfloat[The result of applying decision tree algorithm on labeled parameter valuations shown in Fig.~\ref{fig:bss_clustering_runex}\label{fig:dt_branches}]{\includegraphics[width = .4\textwidth]{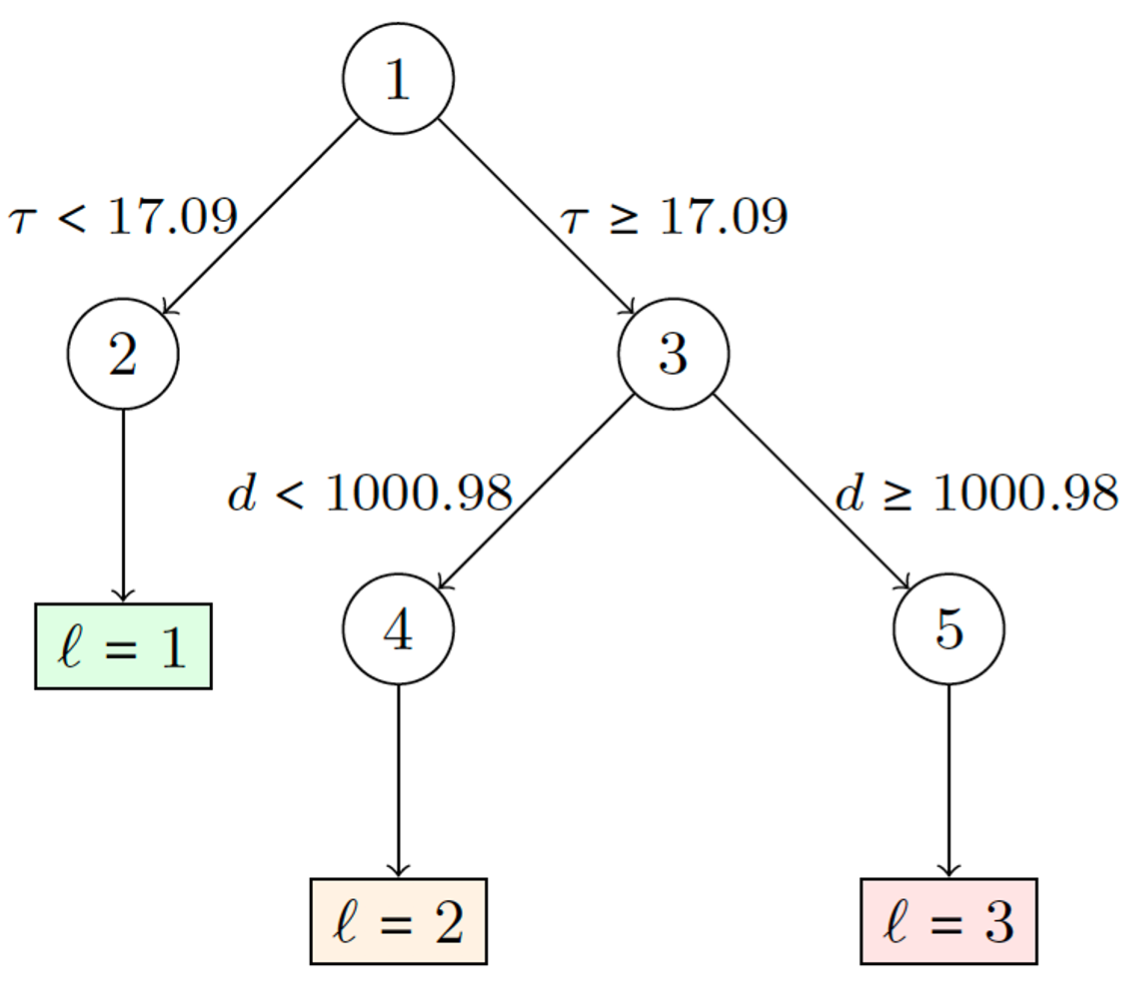}}
\caption{Illustration of clustering on the BSS locations}
\end{figure}

\begin{algorithm}
\KwIn{The learned projections $\pi$, labeling function $\clusterlabel$, threshold on the maximum depth of the Decision Tree $N$, parameter $K$ for k-fold cross validation method, threshold on the accuracy $ACC_{th}$}
\KwOut{The learned decision tree $DT$} 
\SetKwProg{Fn}{Function}{}{}
    {\color{teal}\tcp{Loop over the maximum depth of Decision Tree}}
    \For{$d \leftarrow 1$ to $N$}{
        {\color{teal}\tcp{Compute the cross validation accuracy for $maxDepth = d$}}
        $ACC_d= kfoldCrossValidation(\pi,\clusterlabel(\pi), maxDepth=d, K)$\;
        {\color{teal}\tcp{Choose the max depth that gives the best cross validation accuracy}}
        \If{$ACC_d > ACC_{th}$}{
            {\color{teal}\tcp{Train a Decision Tree with the chosen max depth}}
            $DT = fitDecisionTree(\pi, \clusterlabel(\pi), d)$\;
            \Return $DT$\;
        }

    }
    \Return $\emptyset$;
    
    \Fn{$kfoldCrossValidation(X, Y, maxDepth, K)$}{
        {\color{teal}\tcp{Shuffle the data}}
        $X, Y = Shuffle(X, Y)$\;
        {\color{teal}\tcp{Devide the data into K subsets}}
        $X(1:K), Y(1:K) = DivideToKSubsets (X, Y)$\;
        $sumACC = 0$\;
        {\color{teal}\tcp{Train on K-1 subsets and test on 1 subset}}
        \For{$i \leftarrow 1$ to $K$}{
            $X_{train},Y_{train} = [X(1:i-1), X(i+1:K)], [Y(1:i-1), Y(i+1:K)]$\;
            $X_{test}, Y_{test} = X(i), Y(i)$\;
            $DT = fitDecisionTree(X_{train}, Y_{train}, maxDepth)$\;
            $ACC = predictDecisionTree(DT, X(i), Y(i))$\;
            $sumACC = sumACC + ACC$\;
        }
        {\color{teal}\tcp{Return the average accuracy}}
        \Return $sumACC/K$ \;
    }
\caption{K-fold cross validation approach to determine the best maximum depth of the Decision Tree\label{alg:dt_prune}}
\end{algorithm}

\section{Case Studies}
\label{sec:experiments}
We now present the results of applying the clustering techniques
developed on three benchmarks: (1) COVID-19 data from Los Angeles
County, USA, (2) Outdoor Air Quality data from California, and (3) BSS
data from the city of Edinburgh (running example)\footnote{We provide
results on a fourth benchmark consisting of a synthetic dataset for
tracking movements of people in a food court building and detailed
descriptions for each benchmark in the appenidx.  All experiments were
performed on an Intel Core-i7 Macbook Pro with 2.7 GHz processor and
16 GB RAM. We use an existing monitoring tool MoonLight
\cite{bartocci2020moonlight} in Matlab for computing the robustness of
STREL formulas. For Agglomerative Hierarchical Clustering and Decision
Tree techniques we use scikit-learn library in Python and the
Statistics and Machine Learning Toolbox in Matlab.}. A summary of the
computational aspects of the results is provided in
Table.~\ref{tab:results}. The numbers indicate that our methods 
scale to spatial models containing hundreds of locations, and still
learn interpretable STREL formulas for clusters.
\begin{table}[ht]
\centering
\begin{tabular*}{.95\textwidth}{@{\extracolsep{\fill}}llllll}
\toprule
Case & $|\locations|$ & $|\wfun|$ & run-time (secs) & $\numclusters$ & $|\varphi_{\mathit{cluster}}|$  \\
\midrule
COVID-19&235&427&813.65&3& $3 \cdot |\varphi| + 4$\\
BSS&61&91&681.78&3& 2 $\cdot |\varphi| + 4$\\
Air Quality&107&60&136.02&8& $5 \cdot |\varphi|+7$\\
Food Court* &20&35&78.24&8& $3 \cdot |\varphi| + 4$ \\ [1ex]
\bottomrule
\end{tabular*}
\caption{Summary of results.\label{tab:results}}
\end{table}

\mypara{COVID-19 data from LA County} Understanding the spread pattern
of COVID-19 in different areas is vital to stop the spread of the
disease. While this example is not related to a software system, it is
nevertheless a useful example to show the versatility of our approach
to spatio-temporal data.  The PSTREL formula $\varphi(c,d) =
\somewhere{[0,d]}\{\ev{[0,\tau]}(x > c)$ allows us to number of cases
exceeding a threshold $c$ within $\tau=10$ days in a neighborhood of
size $d$ for a given location\footnote{ We fix $\tau$ to $10$ days and
focus on learning the values of $c$ and $d$ for each location.}.
Locations with small value of $d$ and large value of $c$ are unsafe as
there is a large number of new positive cases within a small radius
around them. 


We illustrate the clustering results in Fig.~\ref{fig:covid_example1}.
Each location in Fig.~\ref{fig:covid_exampl1_clustering} is associated
with a geographic region in LA county (shown in
Fig.~\ref{fig:covid_exampl1_map}), and the \textit{red} cluster
corresponds to hot spots (small $d$ and large $c$).  Applying the DT
classifier on the learned clusters (shown in
Fig.~\ref{fig:covid_exampl1_clustering}) produces 11 hyperboxes, some
of which contain only a few points. Hence we apply our DT pruning
procedure to obtain the largest cluster that gives us at least $90\%$
accuracy. Fig.~\ref{fig:covid_exampl1_pruned} shows the results after
pruning the Decision Tree. We learn the following formula:
\[
\varphi_{red} = \somewhere{[0,4691.29]}(\ev{[0,10]}(x > 3180)) \vee
\somewhere{[0,15000]}(\F_{[0,10]}(x > 5611.5)), \]
This formula means that within 4691.29 meters from any \textit{red}
location, within 10 days, the number of new positive cases exceeds
3180. The COVID-19 data that we used is for September 2020\footnote{In
Fig.~\ref{fig:cluster_changing_time} in the appendix, we show the
results of STREL clustering for 3 different months in 2020, which
confirms the rapid spread of the COVID-19 virus in LA county from
April 2020 to September 2020. Furthermore, we can clearly see spread
of the virus around the hot spots during the time, a further
validation of our approach.}.

\begin{figure}[t]
\centering
\subfloat[The learned hyper-boxes before pruning the DT.
\label{fig:covid_exampl1_clustering}]{\includegraphics[width = .45\textwidth]{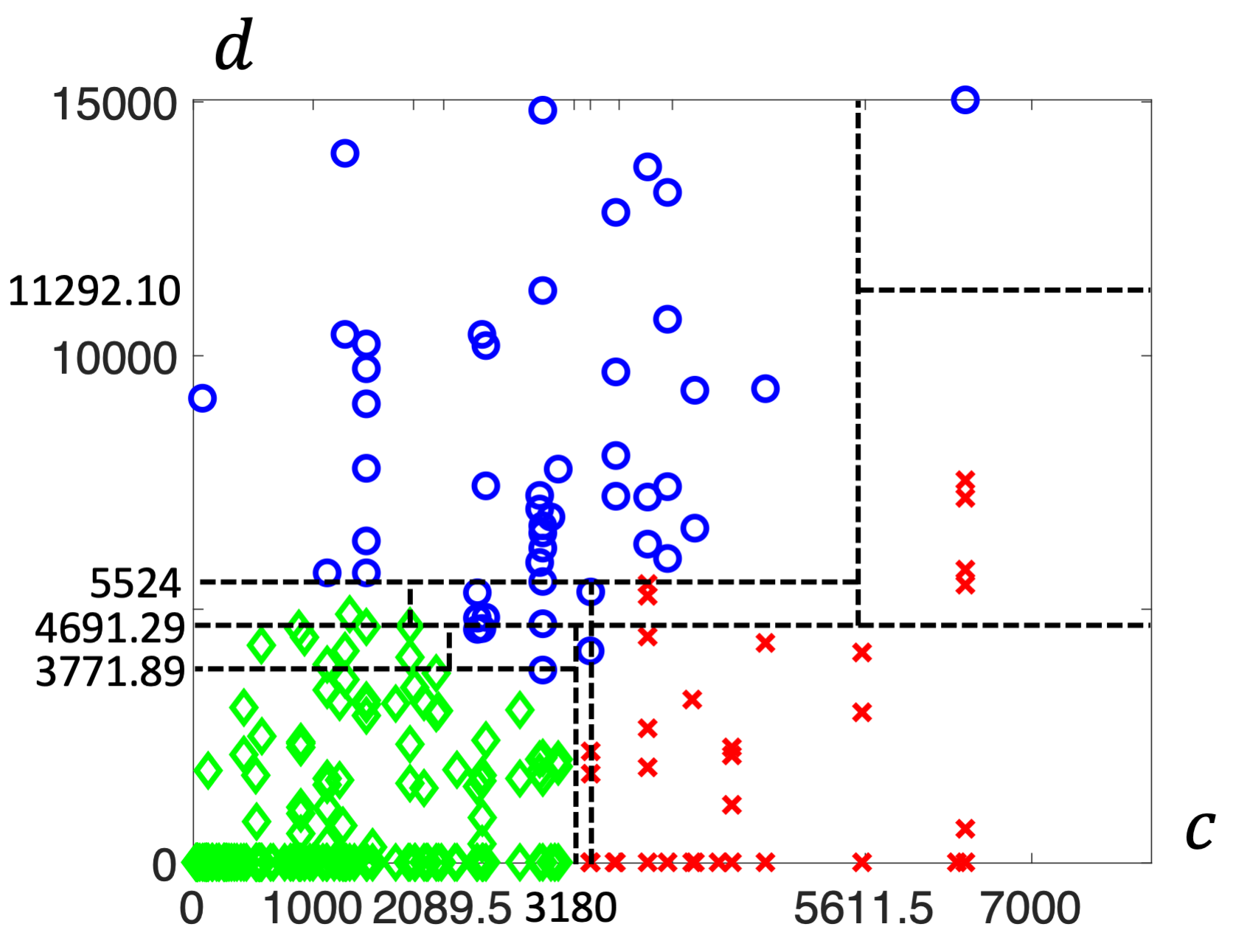}} \ 
\subfloat[The learned hyper-boxes after pruning the DT.\label{fig:covid_exampl1_pruned}]{\includegraphics[width =
.45\textwidth]{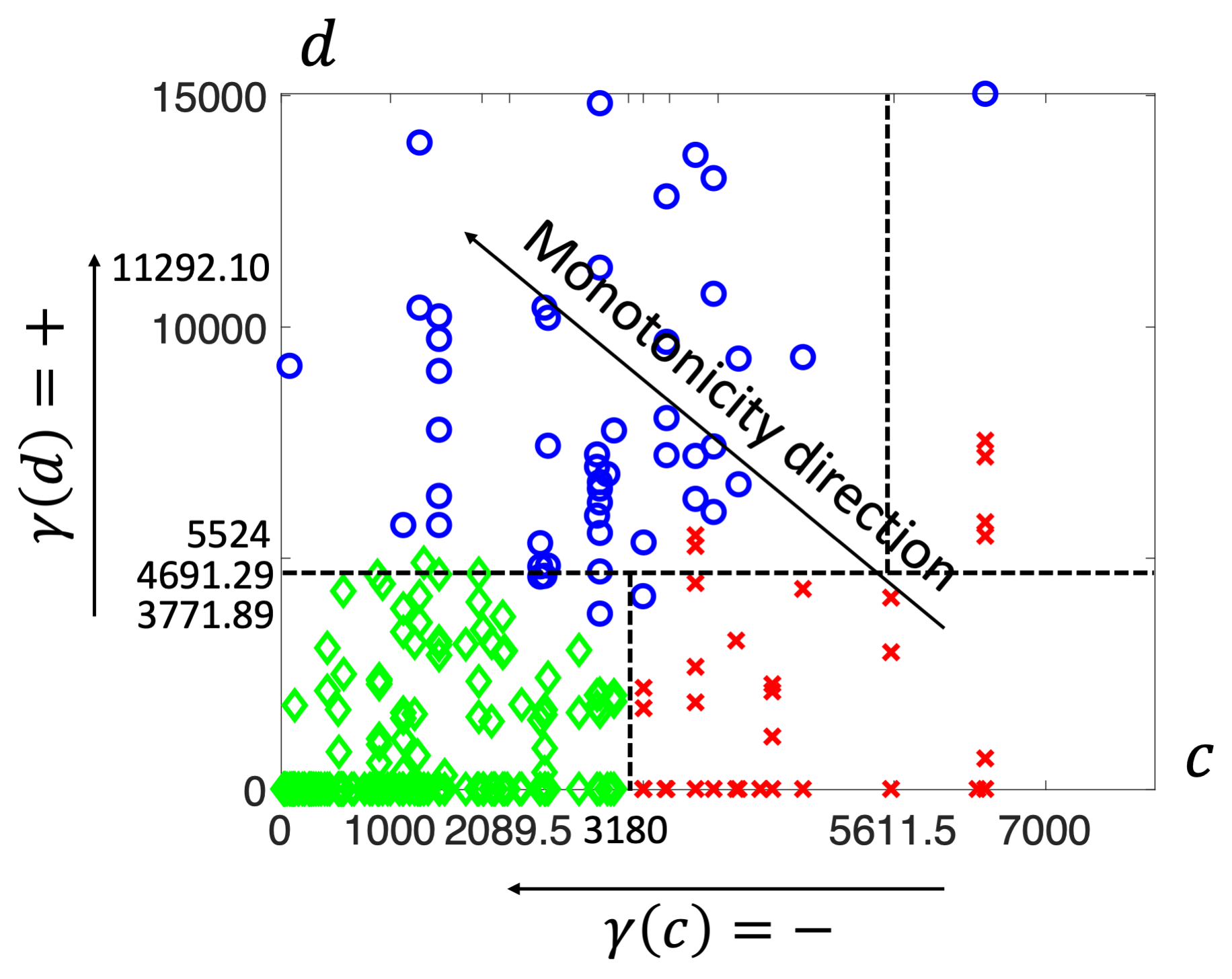}} \ 
\subfloat[Red-color points: hot spots.
\label{fig:covid_exampl1_map}]{\includegraphics[width = .45\textwidth]{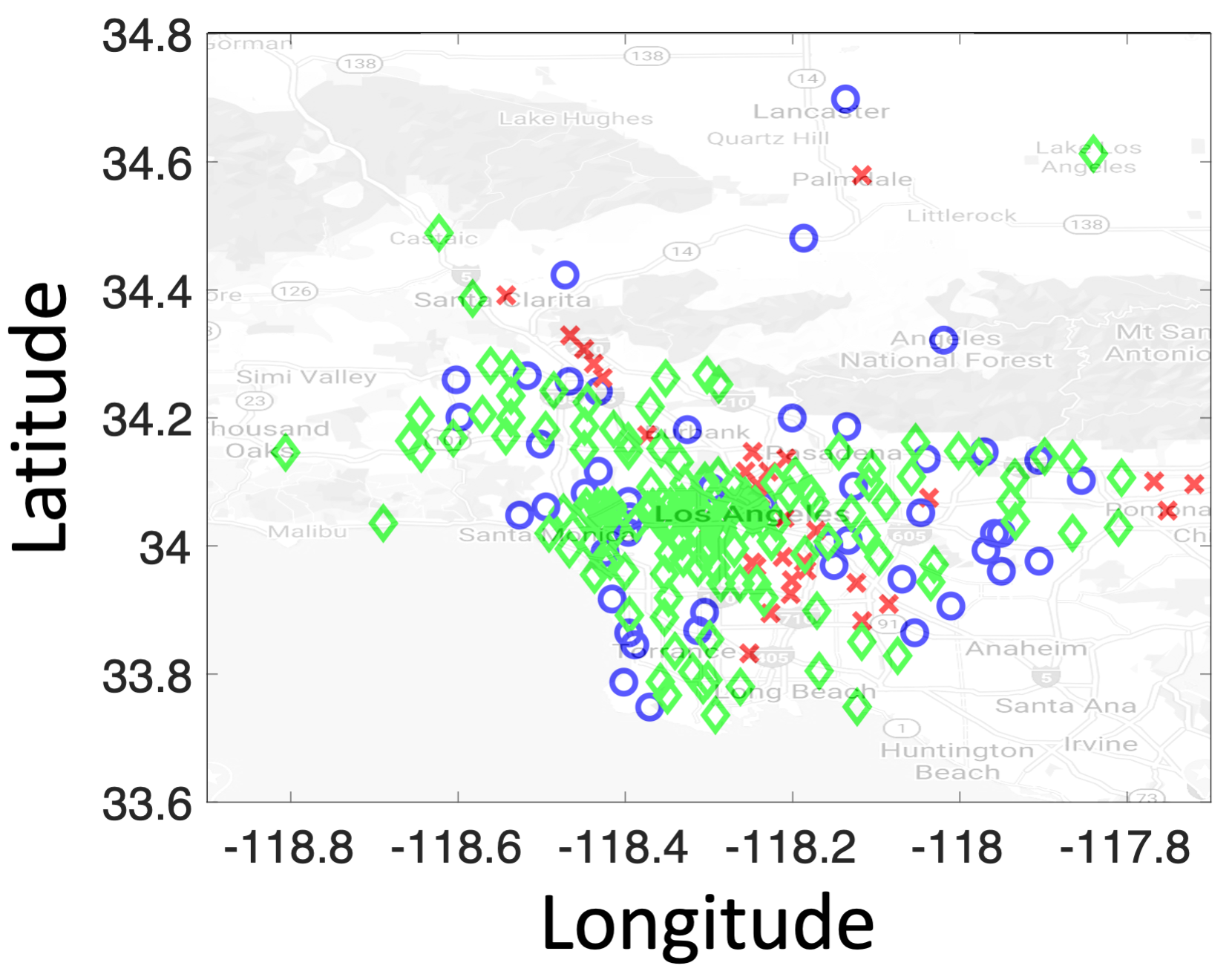}}
\caption{Procedure to learn STREL formulas from COVID-19 data with
template PSTREL formula $\varphi(c,d) = \somewhere{[0,d]}{}(\ev{[0,10]}(x > c))$
\label{fig:covid_example1}}
\end{figure}

\mypara{Outdoor Air Quality data from California} We next consider Air
Quality data from California gathered by the US Environmental
Protection Agency (EPA).  Among reported pollutants we focus on $PM2.5$
contaminant, and try to learn the patterns in the amount of $PM2.5$ in
the air using STREL formulas. Consider a mobile sensing network
consisting of UAVs to monitor pollution, such a STREL formula could be
used to characterize locations that need increased monitoring.

We use the PSTREL formula 
$\varphi(c,d)$ = $\glob{[0,10]}(\escape{[d,16000]}{}(PM2.5 < c))$
and project each
location in California to the parameter space of $c,d$. 
A location
$\ell$ satisfies this property if it is always true within the next 10
days, that there exists a location $\ell'$ at a distance more than
$d$, and a route $\tau$ starting from $\ell$ and reaching $\ell'$ such
that all the locations in the route satisfy the property $PM2.5 < c$.
Hence, it might be possible to escape to a location at a distance
greater than $d$ always satisfying property $PM2.5 < c$. 
The results are shown in Fig.~\ref{fig:pollution_exampl2_clustering}. 
Cluster 8 is the best cluster as it has a
small value of $c$ and large value of $d$ which means that there
exists a long route from the locations in cluster 8 with low density of
$PM2.5$. Cluster 3 is the worst as it has a large value of $c$
and a small value of $d$. The formula for cluster $3$ is 
$\varphi_3$ = $\varphi(500,0) \wedge \neg \varphi(500,2500) \wedge
\neg \varphi(216,0)$.
$\varphi_3$ holds in locations where, in the next 10 days, $PM2.5$ is
always less than $500$, but at least in 1 day $PM2.5$ reaches $216$ and
there is no safe route (i.e. locations along the route have $PM2.5$ <
500) of length at least 2500.

%


\begin{figure}[!tp]
\centering
      \subfloat[The learned Hyper-boxes from Air Quality data. \label{fig:pollution_exampl2_clustering}]{\includegraphics[width = .45\textwidth]{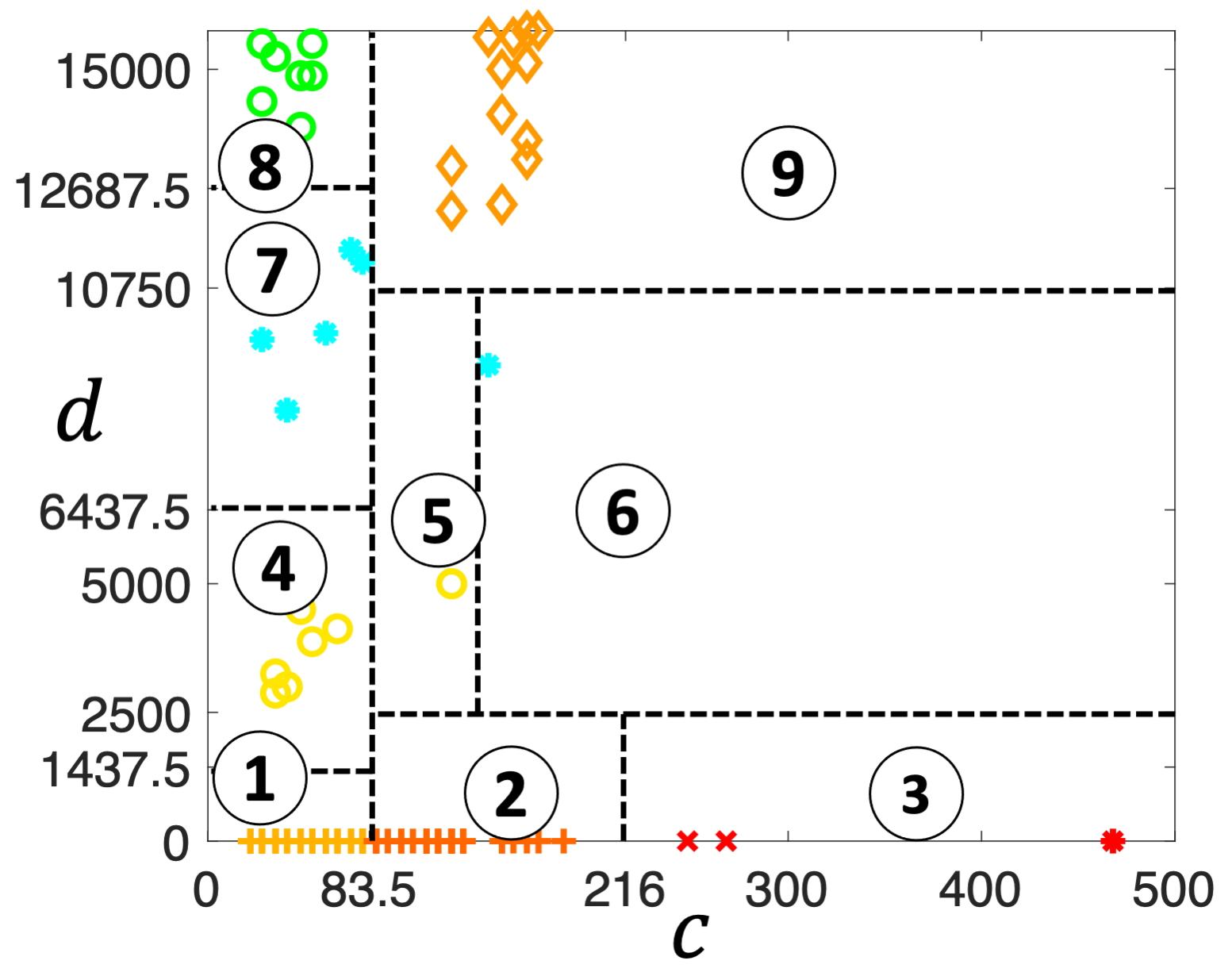}} \quad
  \subfloat[Red and orange points: high density of $PM2.5$. \label{fig:pollution_exampl2_map}]{\includegraphics[width = .45\textwidth]{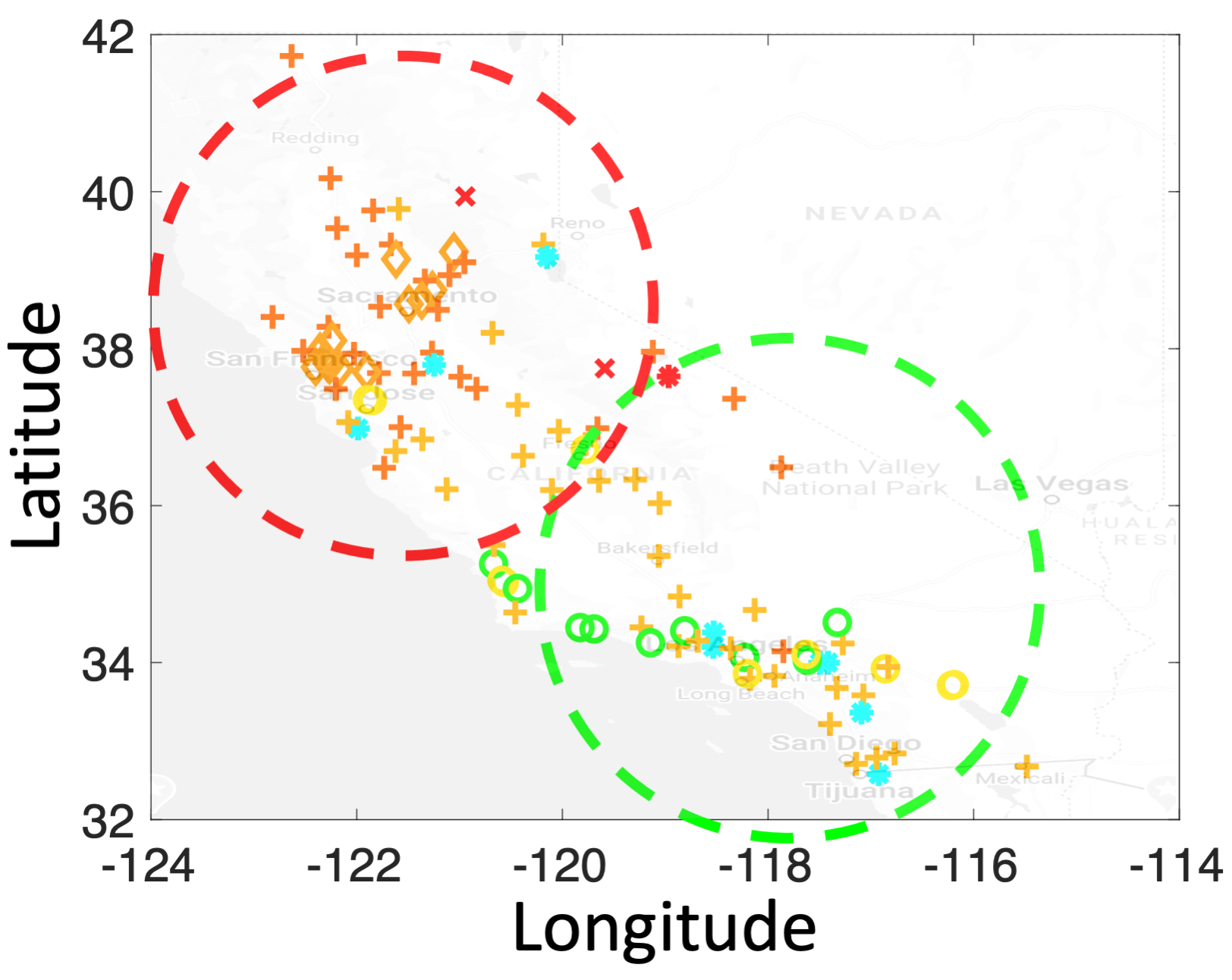}}
\caption{Clustering experiments on the California Air Quality Data}
\label{fig:pollution}
\end{figure}

\section{Related work and Conclusion}
\label{sec:related_work}
\mypara{Traditional ML approaches for time-series clustering}
Time-series clustering is a popular area in the domain of machine learning and data mining. Some techniques for time-series clustering combine clustering methods such as KMeans, Hierarchical Clustering, agglomerative clustering and etc., with similarity metrics
between time-series data such as the Euclidean distance, dynamic
time-warping (DTW) distance, and statistical measures (such as mean,
median, correlation, etc). Some recent works such as the works on shapelets automatically identify distinguishing shapes in the time-series data \cite{zakaria2012clustering}. Such shapelets serve as features for ML tasks. All these approaches are based on shape-similarity which might be useful in some applications; however, for applications that the user is interested in mining temporal information from data, dissimilar traces might be clustered in the same group \cite{vazquez2017logical}. Furthermore, such approaches may lack interpretablility as we showed in BSS case study.

\mypara{STL-based clustering of time-series data} 
There is considerable amount of recent work on learning temporal logic formulas from time-series data using logics such as Signal Temporal Logic (STL) \cite{vazquez2017logical,jin2015mining,mohammadinejad2020interpretable,mohammadinejad2020mining}; however, there is no work on discovering such relations on spatio-temporal data.
In particular, the work in \cite{vazquez2017logical} which addresses unsupervised clustering of time-series data using Signal Temporal Logic is closest to our work.
There are a few hurdles in applying such an approach to spatio-temporal data as explained in Section.~\ref{sec:intro}. 
We address all the hurdles in the current work.

\mypara{Monitoring spatio-temporal properties}
There is considerable amount of recent work  such as \cite{bartocci2017monitoring,bartocci2020moonlight} on monitoring spatio-temporal properties. Particularly, MoonLight \cite{bartocci2020moonlight} is a recent tool for monitoring of STREL properties, and in our current work, we use MoonLight for computing the robustness of spatio-temporal data with respect to STREL formulas. MoonLight uses $(\delta,d)$-connectivity approach for creating a spatial model, which has several issues, including dis-connectivity and distance overestimation. We resolve these issues by proposing our new method for creating the spatial graph, which we call Enhanced MSG.
While there are many works on monitoring of spatio-temporal logic, to the best of our knowledge, there is no work on automatically inferring spatio-temporal logic formulas from data that we address in this work.

\label{sec:conc}
\mypara{Conclusion} In this work, we proposed a technique to learn interpretable STREL formulas from spatio-temporal time-series data for Spatially Distributed Systems. First, we introduced the notion of monotonicity for a PSTREL formula, proving the monotonicity of each spatial operator. We proposed a new method for creating a spatial model with a restrict number of edges that preserves connectivity of the spatial model. We leveraged quantitative semantics of STREL combined with multi-dimensional bisection search to extract features for spatio-temporal time-series clustering. We applied Agglomerative Hierarchical clustering on the extracted features followed by a Decision Tree based approach to learn an interpretable STREL formula for each cluster. We then illustrated with a number of
benchmarks how this technique could be used and the kinds of insights it can
develop. The results show that while our method performs slower than traditional ML approaches, 
it is more interpretable and provides a better insight into the data. For future work, we will study extensions of this approach to supervised and active learning.

\mypara{Acknowledgments}
We thank the anonymous reviewers for their comments. The authors also gratefully acknowledge the support by the National Science Foundation under the Career Award SHF-2048094 and the NSF FMitF award  CCF-1837131, and a grant from Toyota R\&D North America.

\bibliographystyle{unsrtnat}
\bibliography{references}  

\newpage
\appendix
\section*{Appendix}
\section{Boolean and Quantitative Semantics of STREL}
STREL is equipped with both Boolean and quantitative semantics; a Boolean semantics, $(\spatialmodel, \sts, \ell, t) \models \varphi$, with the
meaning that the spatio-temporal trace $\sigma$ in location $\ell$ at time $t$ with spatial
model $\mathcal{S}$, satisfies the formula $\varphi$ and a quantitative semantics, $\rho(\varphi, \mathcal{S}, \sigma, t)$, that can be used to measure the quantitative level of satisfaction of a formula for a given trajectory and space model. The function $\rho$ is also called the robustness function. The satisfaction of the whole trajectory correspond to the satisfaction at time $0$, i.e. $\rho(\varphi, \mathcal{S}, \sigma) = \rho(\varphi, \mathcal{S}, \sigma, 0)$. Semantics for Boolean and temporal operator remain the same as STL \cite{maler2004monitoring}. We describe below the 
quantitative semantics of the spatial operators. Boolean semantics can be derived substituting $\min, \max$ with $\vee, \wedge$ and considering the Boolean satisfaction instead of $\rho$.
\paragraph{\bf Reach} 
The quantitative semantics of the reach operator is: 

\begin{align*}
&\rho(\varphi_{1} \: \reach{[d_1,d_2]}{} \: \varphi_{2}, \mathcal{S}, \sigma,\ell, t)= \max_{\tau\in \routes(\spatialmodel,\ell)}\max_{\ell'\in \tau:\left(\routedistance^\tau(\ell') \in [d_1,d_2] \right)} 
     \linebreak   ( \min( \rho(\varphi_2, \mathcal{S}, \sigma, \ell', t), \\&\min_{j < \tau(\ell')}  \rho(  \varphi_1, \mathcal{S}, \sigma,  \tau[j], t )))    
\end{align*}

$(\mathcal{S}, \sigma, \ell,t)$, a spatio-temporal trace $\sigma$, in location $\ell$, at time $t$, with a spatial model $\mathcal{S}$, satisfies
$\varphi_{1} \: \reach{[d_1, d_2]}{} \: \varphi_{2}$ iff it satisfies $\varphi_2$ in a location $\ell'$ reachable from $\ell$ through a route $\tau$, with a length $\routedistance^\tau(\ell') \in [d_1, d_2]$, and such that $\tau[0]=\ell$ and all its elements with index less than $\tau(\ell')$ satisfy $\varphi_1$.  
Intuitively, the reachability operator $\varphi_1 \reach{[d_1, d_2]}{} \varphi_2$ describes the behavior of reaching a location satisfying property $\varphi_2$ passing only through locations that satisfy $\varphi_1$, and such that the distance from the initial location and the final one belongs to the interval $[d_1, d_2]$.

\paragraph{\bf Escape} 
The quantitative semantics of the escape operator is:$$\rho( \escape{[d_1,d_2]}{} \: \varphi, \mathcal{S}, \sigma,\ell, t)= \max_{\tau\in \routes(\mathcal{S},\ell)} 
        \max_{\ell'\in \tau:\left(\routedistance^\tau[\ell,\ell'] \in [d_1, d_2]\right) }\linebreak
            ~\min_{i \leq \tau(\ell')} 
                \rho(\varphi, \mathcal{S}, \sigma, \tau[i], t).
$$

\noindent
$(\mathcal{S}, \sigma, \ell,t)$, a spatio-temporal trace $\sigma$, in location $\ell$, at time $t$, with a spatial model $\mathcal{S}$, satisfies $\escape{[d_1,d_2]}{} \: \varphi$ if and only if there exists a route $\route$ and a location $\ell'\in \route$ such that $\route[0]=\ell$,    $\routedistance^\tau[\tau[0],\ell'] \in [d_1, d_2]$ and all elements $\tau[0],...\tau[k]$ (with $\route(\ell')=k$) satisfy $\varphi$.  
Practically, the escape operator $\escape{[d_1, d_2]}{} \varphi$,
instead, describes the possibility of escaping from a certain region
passing only through locations that satisfy $\varphi$, via a route with a distance that belongs to the interval $[d_1, d_2]$.
\paragraph{\bf Somewhere}   $\somewhere{[d_1,d_2]}{} \varphi := true  \reach{[d_1,d_2]}{} \varphi$ holds for $(\mathcal{S}, \sigma, \ell,t)$ iff there exists a location $\ell'$ in $\mathcal{S}$ such that $(\mathcal{S}, \sigma, \ell',t)$ satisfies $\varphi$ and $\ell'$ is reachable from $\ell$ via a route $\tau$ with length $\routedistance^\tau[\ell'] \in [d_1, d_2]$.

\paragraph{\bf Everywhere.}  $ \everywhere{[d_1, d_2]}{} \varphi := \neg \somewhere{[d_1,d_2]}{} \neg \varphi $ 
holds for $(\mathcal{S}, \sigma, \ell,t)$ iff  all the locations $\ell'$ reachable from $\ell$ via a path,with length $\routedistance^\tau[\ell'] \in [d_1, d_2]$, satisfy $\varphi$.

\paragraph{\bf Surround} $\varphi_{1} \surround{ [d_1,d_2] }{} \varphi_{2} := \varphi_{1} \wedge \neg (\varphi_{1}\reach{[d_1,d_2]}{} \neg (\varphi_1 \vee \varphi_{2})) \wedge \neg (\escape{[d_2,\infty]}{}  (\varphi_{1})) $ holds for $(\mathcal{S}, \sigma, \ell,t)$ iff there exists a $\varphi_{1}$-region that contains $\ell$, all locations in that region satisfies $\varphi_{1}$ and  are reachable from $\ell$ through a path with length less than $d_2$. Furthermore, all the locations that do not belong to the $\varphi_{1}$-region but are directly connected to a location in $\varphi_{1}$-region must satisfy $\varphi_2$ and be reached from $\ell$ via a path with length in the interval $[d_1, d_2]$. 
Intuitively,  the surround operator indicates the notion of being surrounded by a $\varphi_2$-region, while being in a $\varphi_{1}$-region, with some added constraints.
The idea is that one cannot escape from a $\varphi_{1}$-region without passing from a node that satisfies $\varphi_2$ and, in any case, one has to reach a $\varphi_2$-node at a distance between $d_{1}$ and $d_{2}$.

\begin{lemma}
\label{lem:monitoring_complexity}
Let $\langle \locations, \wfun \rangle$ be a spatial model where
${\routedistance}_{\mathrm{min}}$ is the minimum distance between two
locations and let $H_{[d_1,d_2]}$ be a STREL formula where $H$ is an
arbitrary spatial operator.  Then, the complexity of monitoring
formula  is  $O(k^2 \cdot |\locations| \cdot |\wfun|)$, where
$k=\min\{ i | i\cdot {\routedistance}_{\mathrm{min}} > d_{2} \}$.
\end{lemma}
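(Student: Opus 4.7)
The plan is to analyze the standard fixed-point monitoring algorithm for each spatial operator (reach, escape, and the derived somewhere/everywhere/surround) and show that each iteration of the fixed point processes $O(|\locations|\cdot|\wfun|)$ information, while at most $O(k^2)$ iterations (or an equivalent nested structure) are needed. The central observation I would use first is a \emph{route truncation} lemma: because each edge has weight at least $\routedistance_{\mathrm{min}}$, any route whose accumulated distance lies in $[d_1,d_2]$ can contain at most $k=\min\{i : i\cdot \routedistance_{\mathrm{min}} > d_2\}$ edges, since a route with $k{+}1$ or more hops would already exceed $d_2$. So the sup/max in the semantics of reach and escape can be restricted to routes of length at most $k$ without changing the value.

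Next I would give a dynamic-programming formulation for the reach operator $\varphi_1\reach{[d_1,d_2]}\varphi_2$. Define a table $R_i[\ell,\ell']$ that stores the best robustness of a route from $\ell$ to $\ell'$ using exactly $i$ edges, with intermediate locations satisfying $\varphi_1$ and the last satisfying $\varphi_2$, together with its accumulated distance. The update $R_{i}$ from $R_{i-1}$ relaxes each edge $(\ell'',w,\ell')\in\wfun$ and updates the pair $(\ell,\ell')$ via the usual $\min/\max$ combination with $\rho(\varphi_1,\ell'')$; this is a single sweep over $|\locations|$ sources and $|\wfun|$ edges, so one iteration costs $O(|\locations|\cdot|\wfun|)$. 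Because of the route-truncation bound we only need $i=1,\ldots,k$, giving an intermediate cost of $O(k\cdot|\locations|\cdot|\wfun|)$.

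The extra factor of $k$ in the claimed bound I would attribute to the distance-interval bookkeeping. Since we must track not only the number of edges but also whether the accumulated weight lies in $[d_1,d_2]$, each entry of $R_i$ is a function of an auxiliary distance coordinate whose effective range is covered by $O(k)$ buckets (intervals of width $\routedistance_{\mathrm{min}}$ inside $[0,d_2]$). Combining the two nested indices gives the factor $k^2$, so the total cost of monitoring a reach formula is $O(k^2\cdot|\locations|\cdot|\wfun|)$. The same template, applied to the escape semantics (which uses the same sup-over-routes/min-along-route structure), gives the same bound, and the derived operators $\somewhere{[d_1,d_2]}$, $\everywhere{[d_1,d_2]}$, $\surround{[d_1,d_2]}$ expand into a constant number of reach/escape calls, so the bound propagates.

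The main obstacle, I expect, is the careful justification of the second factor of $k$: one must argue that distance bookkeeping genuinely costs only $O(k)$ per $(\ell,\ell',i)$-triple and cannot be absorbed into the edge relaxation. The cleanest way to handle this is to observe that, because all intermediate distances are bounded by $d_2$ and increase by at least $\routedistance_{\mathrm{min}}$ at each hop, the set of attainable accumulated distances at depth $i$ has cardinality $O(k)$, so each relaxation costs $O(k)$ rather than $O(1)$. A secondary subtlety is to confirm that the min/max operations preserve the best value per distance-bucket, so that pruning to $O(k)$ representatives per cell is lossless; this follows from the monotonicity of the STREL semantics in the distance argument established earlier for the spatial operators.
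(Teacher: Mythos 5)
The paper does not actually prove this lemma: it is stated in the appendix without proof and is, in effect, imported from the STREL monitoring literature (\cite{bartocci2017monitoring,bartocci2020moonlight}), so there is no in-paper argument to compare yours against. Judged on its own terms, your skeleton is the right one --- the route-truncation observation (every edge weighs at least $\routedistance_{\mathrm{min}}$, so any route whose accumulated distance can still land in $[d_1,d_2]$ has fewer than $k$ hops) and the $O(|\locations|\cdot|\wfun|)$ cost of one relaxation sweep are both sound and account correctly for one factor of $k$ and the $|\locations|\cdot|\wfun|$ term.

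The gap is in how you obtain the second factor of $k$. You assert that ``the set of attainable accumulated distances at depth $i$ has cardinality $O(k)$''; this is false in general. What has cardinality $O(k)$ is the number of width-$\routedistance_{\mathrm{min}}$ buckets inside $[0,d_2]$; the number of distinct attainable sums of $i$ edge weights bounded by $d_2$ can be combinatorially large (take edge weights $1, 1+\epsilon, 1+2\epsilon,\dots$). Your proposed repair --- keep one representative per bucket and invoke ``monotonicity of the STREL semantics in the distance argument'' --- does not close this hole: the reach and escape constraints are two-sided ($\routedistance^\tau(\ell')\in[d_1,d_2]$), and whether a future extension of a partial route lands in that interval depends on the exact accumulated distance, not on its bucket. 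Two partial routes arriving at the same location in the same bucket with different exact distances and different prefix values are in general incomparable, so per-bucket pruning is lossy and the algorithm you describe computes only an approximation of the semantics. To make the bound rigorous you must either bound the size of the per-location map from distances to robustness values by some other mechanism (which is what the original STREL fixed-point algorithm does, and what this lemma is silently relying on), or keep a Pareto frontier of (distance, value) pairs per location and separately prove that its size stays $O(k)$; neither step is supplied. The claim that the derived operators reduce to a constant number of reach/escape calls is fine and does propagate the bound once the base case is established.
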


\section{Monotonicity Proofs for spatial operators}

\begin{lemma}
The polarity for PSTREL formulas
$\varphi(d_1,d_2)$ of the form $\psi_1\reach{[d_1,d_2]}\psi_2$, $\escape{[d_1,d_2]}\psi$, $\somewhere{[d_1,d_2]}\psi$ and $\psi_1\surround{[d_1,d_2]}\psi_2$ are
$\polarity(d_1) = -$ and $\polarity(d_2) = +$, i.e. if a
spatio-temporal trace satisfies $\varphi(\val(d_1),\val(d_2))$, then
it also satisfies any STREL formula over a strictly larger spatial
model induced distance interval, i.e. by decreasing $\val(d_1)$ and
increasing $\val(d_2)$.  For a formula $\everywhere{[d_1,d_2]}\psi$,
$\polarity(d_1) = +$ and $\polarity(d_2) = -$, i.e. the formula
obtained by strictly shrinking the distance interval.
\end{lemma}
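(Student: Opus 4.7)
My plan is to prove each polarity claim directly from the semantic definition of the operator, using a witness-preservation argument for the three existential modalities (reach, escape, somewhere) and De Morgan duality for the universal modality (everywhere). The surround operator, being defined as a Boolean combination of reach and escape with mixed interval parameters, will require a separate case analysis.

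For $\psi_1 \reach{[d_1,d_2]} \psi_2$, I would start from the Boolean semantics: $(\sts,\ell) \models \psi_1 \reach{[d_1,d_2]} \psi_2$ iff there exist a route $\tau \in \routes(\spatialmodel,\ell)$ and a location $\ell' \in \tau$ with $\routedistance^\tau(\ell') \in [d_1,d_2]$ such that $\psi_2$ holds at $\ell'$ and $\psi_1$ holds at every earlier location on $\tau$. For any $d_1' \le d_1$ and $d_2' \ge d_2$, the inclusion $[d_1,d_2] \subseteq [d_1',d_2']$ implies that the same witness $(\tau,\ell')$ still satisfies $\routedistance^\tau(\ell') \in [d_1',d_2']$, so the enlarged formula also holds at $(\sts,\ell)$. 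This yields $\polarity(d_1) = -$ and $\polarity(d_2) = +$. The escape case is essentially identical, using the analogous witness consisting of a route $\tau$ and a location $\ell'$ such that $\psi$ holds at every $\tau[i]$ with $i \le \tau(\ell')$, so enlarging the interval preserves it. Since $\somewhere{[d_1,d_2]} \psi := \mathit{true}\, \reach{[d_1,d_2]} \psi$, its polarities are inherited from reach. For $\everywhere{[d_1,d_2]} \psi := \neg\, \somewhere{[d_1,d_2]} \neg \psi$, I would invoke the elementary fact that negation flips each parameter's polarity, immediately giving $\polarity(d_1) = +$ and $\polarity(d_2) = -$.

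The hard part will be surround, whose expansion $\psi_1 \wedge \neg(\psi_1 \reach{[d_1,d_2]} \neg(\psi_1 \vee \psi_2)) \wedge \neg\, \escape{[d_2,\infty]} \psi_1$ contains two occurrences of $d_2$ whose naive polarity analyses point in opposite directions: increasing $d_2$ makes the reach subformula easier (hence its negated conjunct harder), but simultaneously shrinks $[d_2,\infty]$ in the escape subformula (hence its negated conjunct easier). To reconcile these, I would bypass the syntactic expansion and reason directly from the geometric meaning of surround -- namely, that the connected $\psi_1$-component containing $\ell$ has bounded route-extent and is surrounded by $\psi_2$-locations whose route-distance from $\ell$ lies in $[d_1,d_2]$ -- and show by a witness-preservation argument that enlarging $[d_1,d_2]$ enlarges the space of admissible $\psi_1$-regions and admissible surrounding $\psi_2$-boundaries simultaneously. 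If a careful case analysis reveals that full monotonicity in $d_2$ does not follow without additional assumptions, I would weaken the surround claim accordingly and flag it explicitly, rather than conflate it with the clean existential cases.
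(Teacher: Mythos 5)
Your argument for reach, escape, somewhere, and everywhere is correct and is essentially the paper's argument in Boolean clothing: the paper works with the quantitative semantics and observes that enlarging the distance interval enlarges the index set of the outer $\max$ (so robustness is monotone in $d_1$, $d_2$), whereas you preserve an explicit witness $(\tau,\ell')$ under the inclusion $[d_1,d_2]\subseteq[d_1',d_2']$; these are the same idea, and your derivation of everywhere by polarity-flipping under negation matches the paper's intent. Where you genuinely diverge is on surround. The paper proves only the reach case explicitly and asserts that ``the proofs for other spatial operators are similar,'' but as you correctly observe, this is not true for $\surround{[d_1,d_2]}{}$: in the expansion $\psi_1 \wedge \neg(\psi_1\reach{[d_1,d_2]}\neg(\psi_1\vee\psi_2)) \wedge \neg\,\escape{[d_2,\infty]}\psi_1$, the parameter $d_2$ occurs with positive polarity in the negated escape conjunct but negative polarity in the negated reach conjunct, and $d_1$ occurs only under a negated reach, which would give it polarity $+$ rather than the claimed $-$. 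So the lemma's inclusion of surround does not follow by the paper's ``similar'' argument, and your plan to either argue directly from the geometric characterization or explicitly weaken the surround claim is the more honest treatment; you have identified a real gap in the paper's proof rather than in your own. The only caveat is that your proposal leaves the surround case unresolved (a plan, not a proof), so to fully establish the lemma as stated you would still need to either produce the direct geometric argument or exhibit a counterexample and restrict the lemma to the other four operators.
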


\begin{proof}
To prove the above lemma, we first define some ordering on intervals. For intervals $I=[a,b]$ and $I^\prime=[a^\prime,b^\prime]$,  $$I^\prime \geq I \iff a^\prime \leq a \;and\; b^\prime \geq b.$$

Followed by the defined ordering on intervals, 
\begin{equation}
\label{eqn:max_i}
\max_{I^\prime} f(x) \geq \max_I f(x) \iff I^\prime \geq I
\end{equation}
\begin{equation}
\label{eqn:min_i}
\min_{I^\prime} f(x) \leq \min_I f(x) \iff I^\prime \geq I
\end{equation}

Assuming $d_1^\prime \leq d_1$ and $d_2^\prime \geq d_2$, from quantitative semantics of the Reach operator and equation.~\ref{eqn:max_i} we get:

$$\rho(\varphi_{1} \: \reach{[d_1^\prime,d_2]}{} \: \varphi_{2}, \mathcal{S}, \sigma,\ell, t) \geq \rho(\varphi_{1} \: \reach{[d_1,d_2]}{} \: \varphi_{2}, \mathcal{S}, \sigma,\ell, t),$$

$$\rho(\varphi_{1} \: \reach{[d_1,d_2^\prime]}{} \: \varphi_{2}, \mathcal{S}, \sigma,\ell, t) \geq \rho(\varphi_{1} \: \reach{[d_1,d_2]}{} \: \varphi_{2}, \mathcal{S}, \sigma,\ell, t),$$

which proves that the Reach operator is monotonically decreasing with respect to $d_1$ and monotonically increasing with respect to $d_2$. The proofs for other spatial operators are similar, and we skip for brevity.

\end{proof}

\section{Case studies}
\subsection{COVID-19 data from LA County}
COVID-19 pandemic has affected everyone's life tremendously. Understanding the spread pattern of COVID-19 in different areas can help people to choose their daily communications appropriately which helps in decreasing the spread of the virus. Particularly, we consider COVID-19 data from LA County \cite{kiamari2020covid} and use STREL formulas to understand the underlying patterns in the data. The data consists of the number of daily new cases of COVID-19 in different regions of LA County. In our analysis, we exclude the regions with more than $15\%$ missing values resulting in a total of 235 regions. For the remaining 235 regions, we fill the missing values with the nearest non-missing value. We construct the spatial model using our $(\alpha,\dhaver)$-Enhanced MSG approach with $\alpha=2$ which results in a total of 427 edges in the spatio-temporal graph. 
One of the important properties for a region is: ``at some place within the radius of $d$ from the region, eventually within $\tau$ days, the number of new COVID-19 cases goes beyond a certain threshold c''. For simplicity, we fix $\tau$ to $10$ days, and we are interested in learning the tight values of $c$ and $d$ for each region in LA County. Regions with small value of $d$ and large value of $c$ are unsafe regions because there is a large number of new positive cases within a small radius around such regions. However, regions with large values of $d$ and small values of $c$ are potentially safe regions because within a large radius around such regions there is only a small number of new positive cases. This property can be specified using PSTREL formula $\varphi(c,d) = \somewhere{[0,d]}{}\{\F_{[0,\tau]}(x > c)$. To learn the tight values of $c$ and $d$ for each region in LA County we use the lexicographic projection approach formalized in Algo.~\ref{alg:bisection_search} (assuming that $\tau$ is fixed to 10 days and ordering of $c >_\params d$ on parameters). This technique projects each region in LA county to a representative point in the parameter space of $\varphi$. Next, we use Agglomerative Hierarchical Clustering to cluster the regions into multiple groups with respect to $\varphi$. We use the learned parameter valuations $(c, d)$ as features for clustering algorithm with the number of clusters as $k=3$. The results are illustrated in Fig.~\ref{fig:covid_example1}.

The \textit{red} cluster consists of the unsafe region or hot spots since it has small $d$ and large $c$. Each point in Fig.~\ref{fig:covid_exampl1_clustering} is associated with a region in LA County illustrated in Fig.~\ref{fig:covid_exampl1_map}. The total time for learning the clusters from spatio-temporal data is 813.65 seconds. Next, we are interested in learning an interpretable STREL formula for each cluster using our Decision Tree based approach. The result of applying the Decision Tree classifier on the learned clusters is shown as dash-lines in Fig.~\ref{fig:covid_exampl1_clustering}. To do a perfect separation, the Decision Tree learns 11 hyper-boxes which some of them only contain a negligible number of data points. While using all the 11 hyper-boxes results in guaranteed STREL formulas for each cluster of points, the STREL formulas that are learned might be complicated and hence not interpretable. To mitigate this issue and improve the interpretability of the learned STREL formulas for each cluster, we try to learn an approximate STREL formula for each cluster of points using the Decision Tree pruning method proposed in Algo.~\ref{alg:dt_prune}.

We compute an STREL formula for each cluster as follows:

\begin{align*}
& \varphi_{blue} = \varphi(0,15000) \wedge \neg \varphi(0,4691.29)\wedge \neg \varphi(5611.5,15000) = \neg \varphi(0,4691.29)\wedge \\ &\neg \varphi(5611.5,15000)\\ & \varphi_{green} = \varphi(0,4691.29) \wedge \neg \varphi(0,0) \wedge \neg \varphi(3180,4691.29) = \varphi(0,4691.29) \wedge \\ &\neg \varphi(3180,4691.29)\\ &\varphi^1_{red} = \varphi(3180,4691.29) \wedge \neg \varphi(8000,4691.29) \wedge \neg \varphi(3180,0) = \varphi(3180,4691.29)\\
&\varphi^2_{red} = \varphi(5611.5,15000) \wedge \neg \varphi(8000,15000) \wedge \neg \varphi(5611.5,4691.29) = \\ &\varphi(5611.5,15000)\\
&\varphi{red} = \varphi^1_{red} \vee \varphi^2_{red}.
\end{align*}

By replacing $\varphi(c,d)$ with $\somewhere{[0,d]}{}\{\F_{[0,10]}(x > c)\}$ in the above STREL formulas we get:

\begin{align*}
& \varphi_{blue} =\everywhere{[0,4691.29]}{}\{G_{[0,10]}(x = 0)\}\wedge \everywhere{[0,15000]}{}\{G_{[0,10]}(x < 5611.5)\}\\ & \varphi_{green} = \somewhere{[0,4691.29]}{}\{\F_{[0,10]}(x > 0)\} \wedge \everywhere{[0,4691.29]}{}\{G_{[0,10]}(x < 3180)\}\\ &\varphi^1_{red} = \somewhere{[0,4691.29]}{}\{\F_{[0,10]}(x > 3180)\}\\
&\varphi^2_{red} = \somewhere{[0,15000]}{}\{\F_{[0,10]}(x > 5611.5)\}\\
&\varphi_{red} = \varphi^1_{red} \vee \varphi^2_{red} = \somewhere{[0,4691.29]}{}\{\F_{[0,10]}(x > 3180)\} \vee \\ &\somewhere{[0,15000]}{}\{\F_{[0,10]}(x > 5611.5)\}
\end{align*}

The learned formula for the \textit{blue} cluster is $\varphi_{blue} =\everywhere{[0,4691.29]}{}\{G_{[0,10]}(x = 0)\}\wedge \everywhere{[0,15000]}{}\{G_{[0,10]}(x < 5611.5)\}$, which means that within the radius of 4691.29 meters around the \textit{blue} points, in the next 10 days, there is no new positive cases. By increasing the radius to 15000 meters there is new positive cases but the number does not exceed 5611. The learned formula for the \textit{green} cluster is $\varphi_{green} = \somewhere{[0,4691.29]}{}\{\F_{[0,10]}(x > 0)\} \wedge \everywhere{[0,4691.29]}{}\{G_{[0,10]}(x < 3180)\}$, which means that within the radius of 4691.29 meters from the \textit{green} points, at some point in the next 10 days, there will be new positive cases but the number will not exceed 3180. Finally, formula $\varphi_{red} = \somewhere{[0,4691.29]}{}\{\F_{[0,10]}(x > 3180)\} \vee \somewhere{[0,15000]}{}\{\F_{[0,10]}(x > 5611.5)\}$ which is learned for the \textit{red} cluster means that within the radius of 4691.29 meters from the \textit{red} points, in the next 10 days, the number of new positive cases exceeds 3180. By increasing the radius to 15000, in some region within this radius, the number of new positive cases goes beyond 5611. Thus, the regions associated with the \textit{red} points (illustrated in Fig.~\ref{fig:covid_exampl1_map}) are the hot spots and people should avoid any unnecessary communications in these regions in the next 10 days. The COVID-19 data that we considered is for September 2020. In Fig.~\ref{fig:cluster_changing_time}, we show the results of clustering for 3 different months in 2020, which confirms the rapid spread of the COVID-19 virus in LA county from April 2020 to September 2020.

\begin{figure}[!tp]
\centering
\setkeys{Gin}{height=3cm} 

\makebox[\textwidth]{%
  \setlength{\tabcolsep}{3pt}%
  \begin{tabular}{@{}cc@{}}
    \subfloat[Clusters learned from the COVID-19 data for April 2020.]{\includegraphics[width = .3\textwidth]{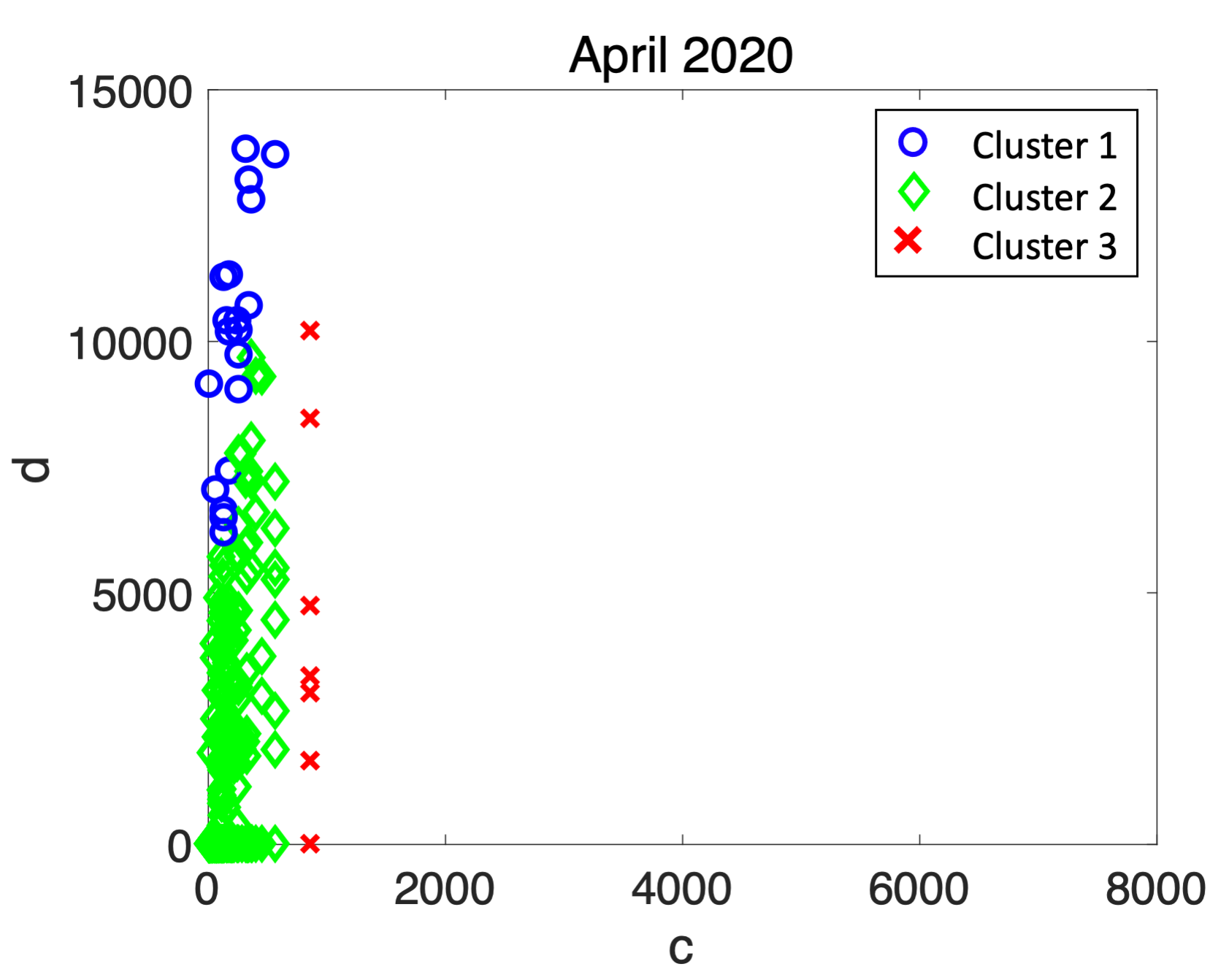}} \quad
  \subfloat[Clusters learned from the COVID-19 data for June 2020.]{\includegraphics[width = .3\textwidth]{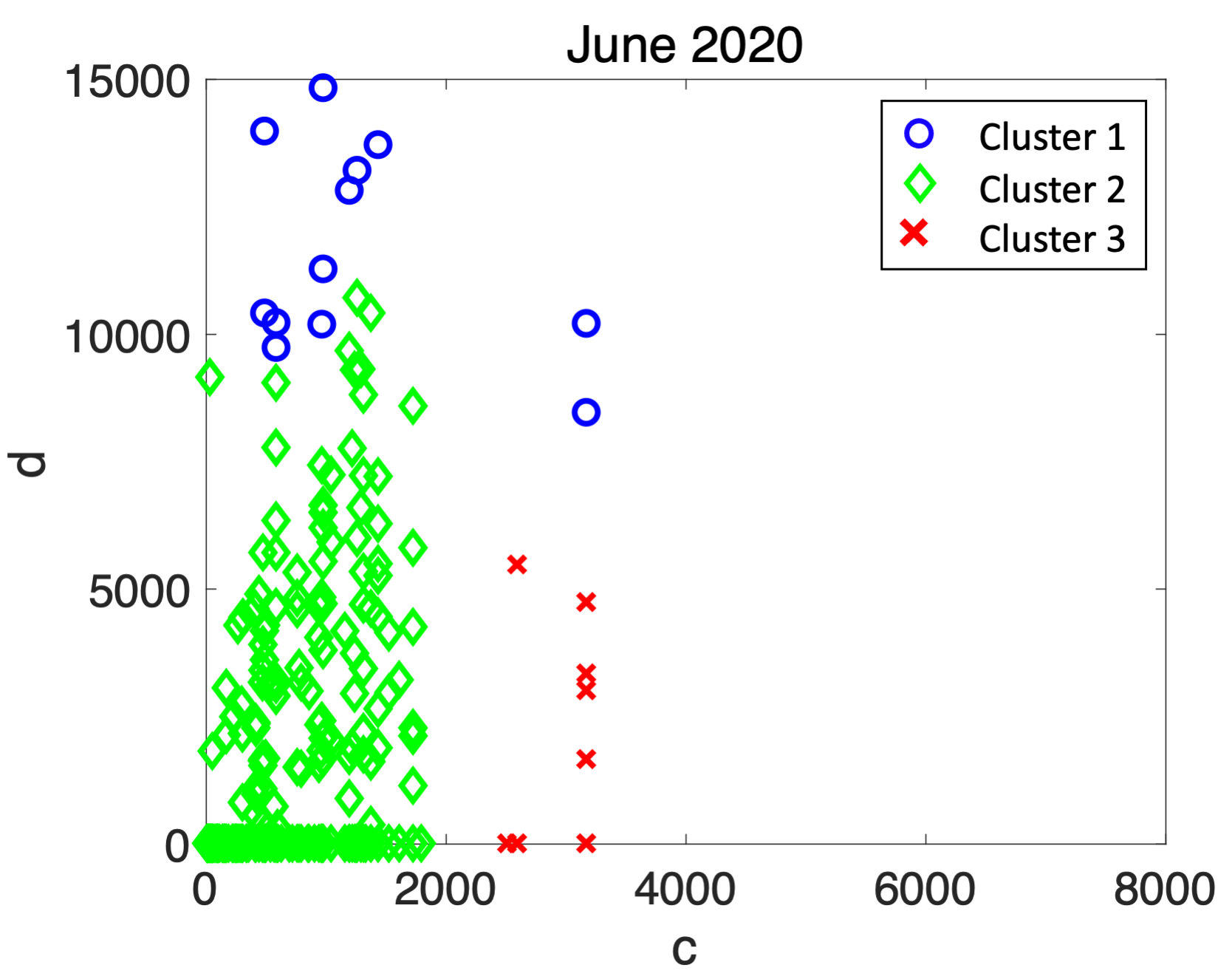}} \quad
  \subfloat[Clusters learned from the COVID-19 data for September 2020.]{\includegraphics[width = .3\textwidth]{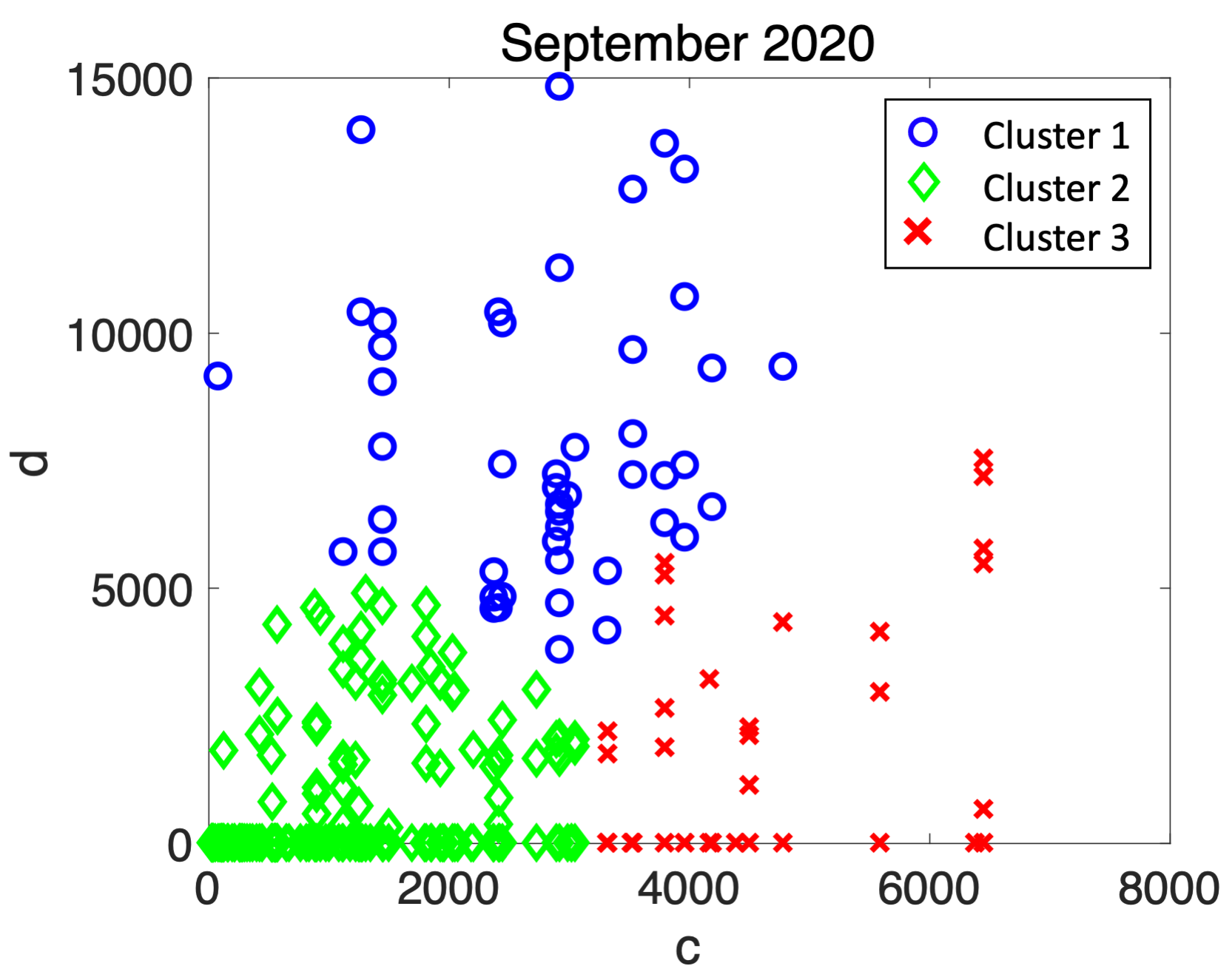}}\\
    \subfloat[Regions in LA county associated with the learned clusters from the COVID-19 data for April 2020.]{\includegraphics[width = .3\textwidth]{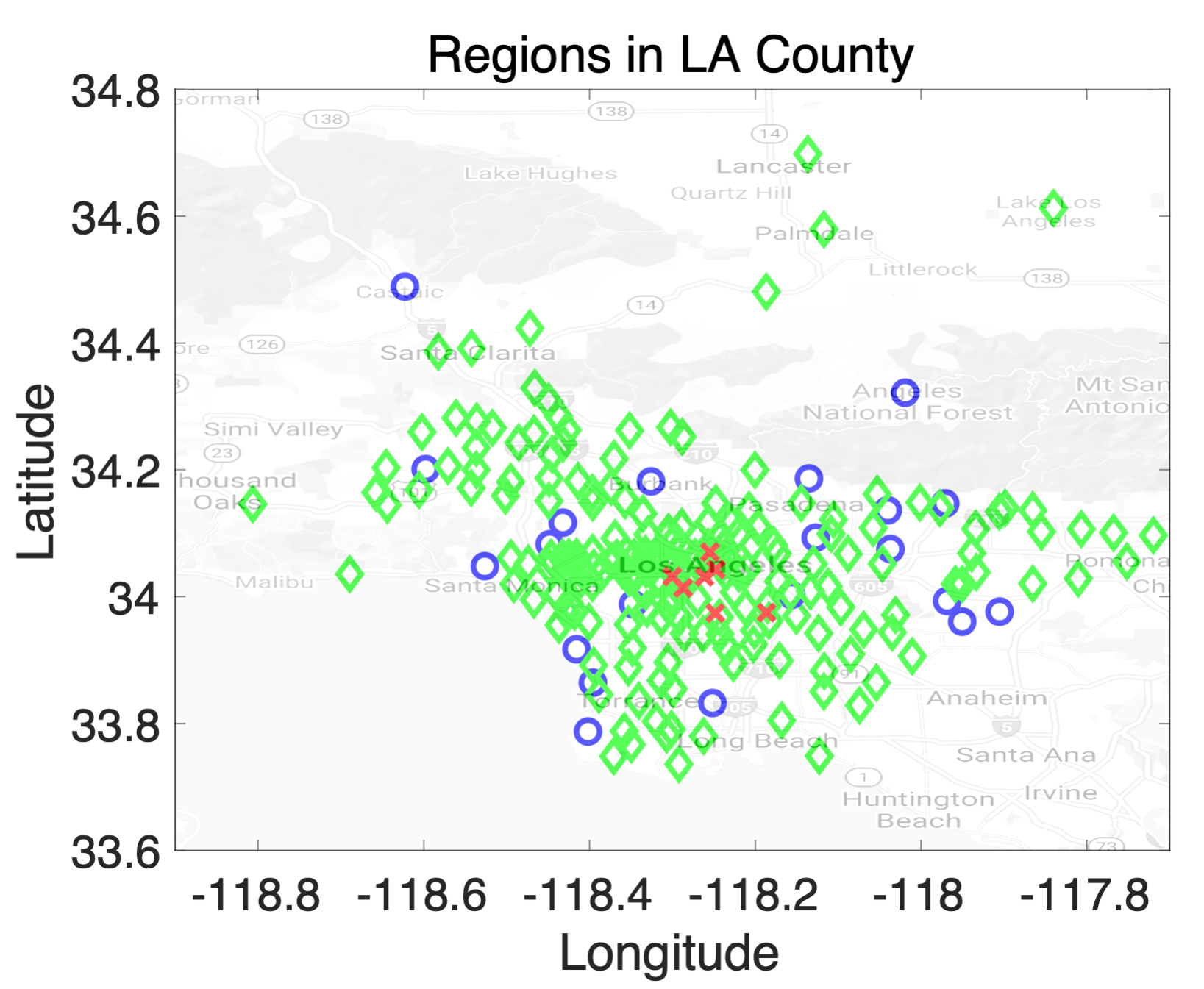}}
 \quad
    \subfloat[Regions in LA county associated with the learned clusters from the COVID-19 data for June 2020.]{\includegraphics[width = .3\textwidth]{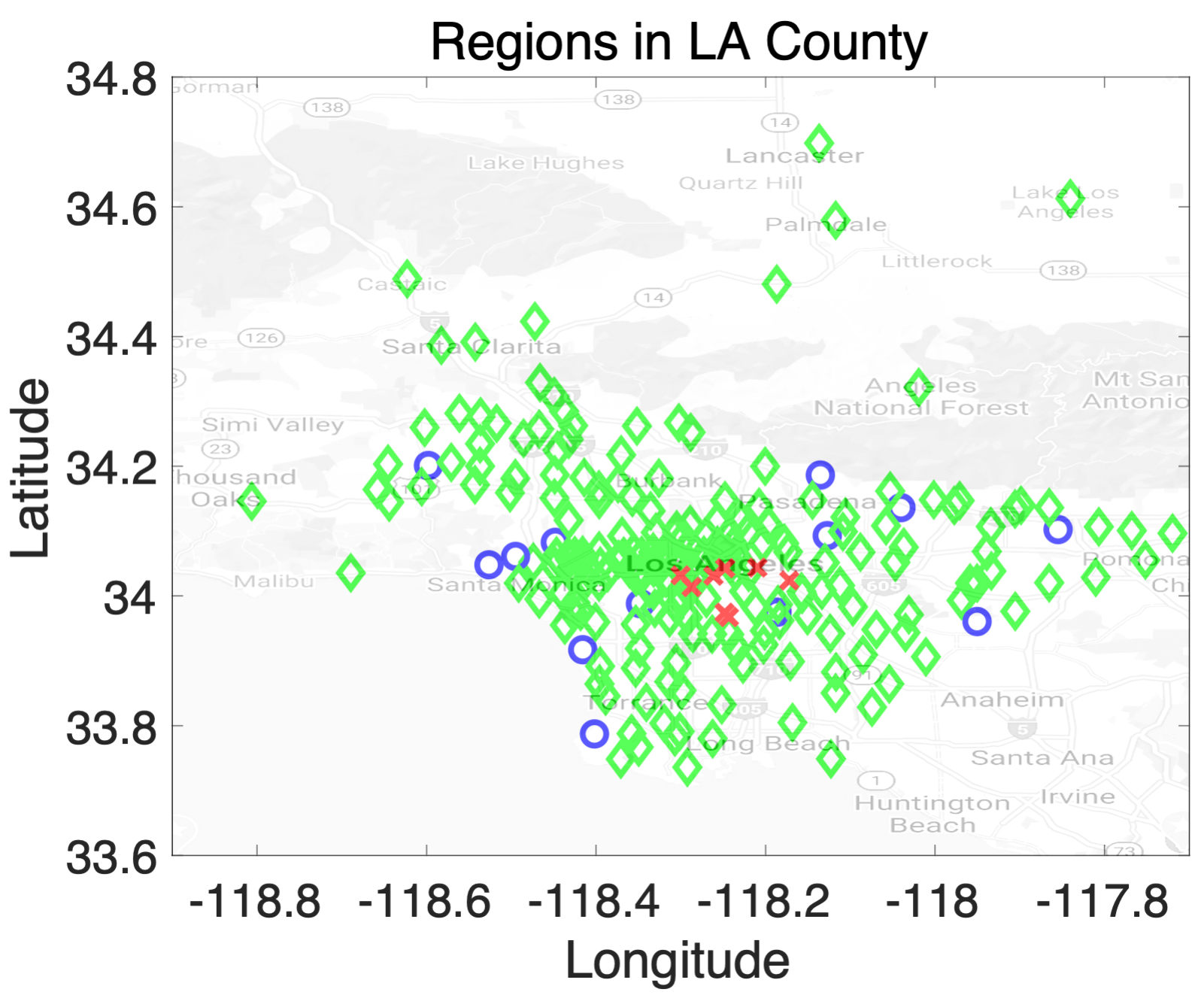}}
 \quad
     \subfloat[Regions in LA county associated with the learned clusters from the COVID-19 data for September 2020.]{\includegraphics[width = .3\textwidth]{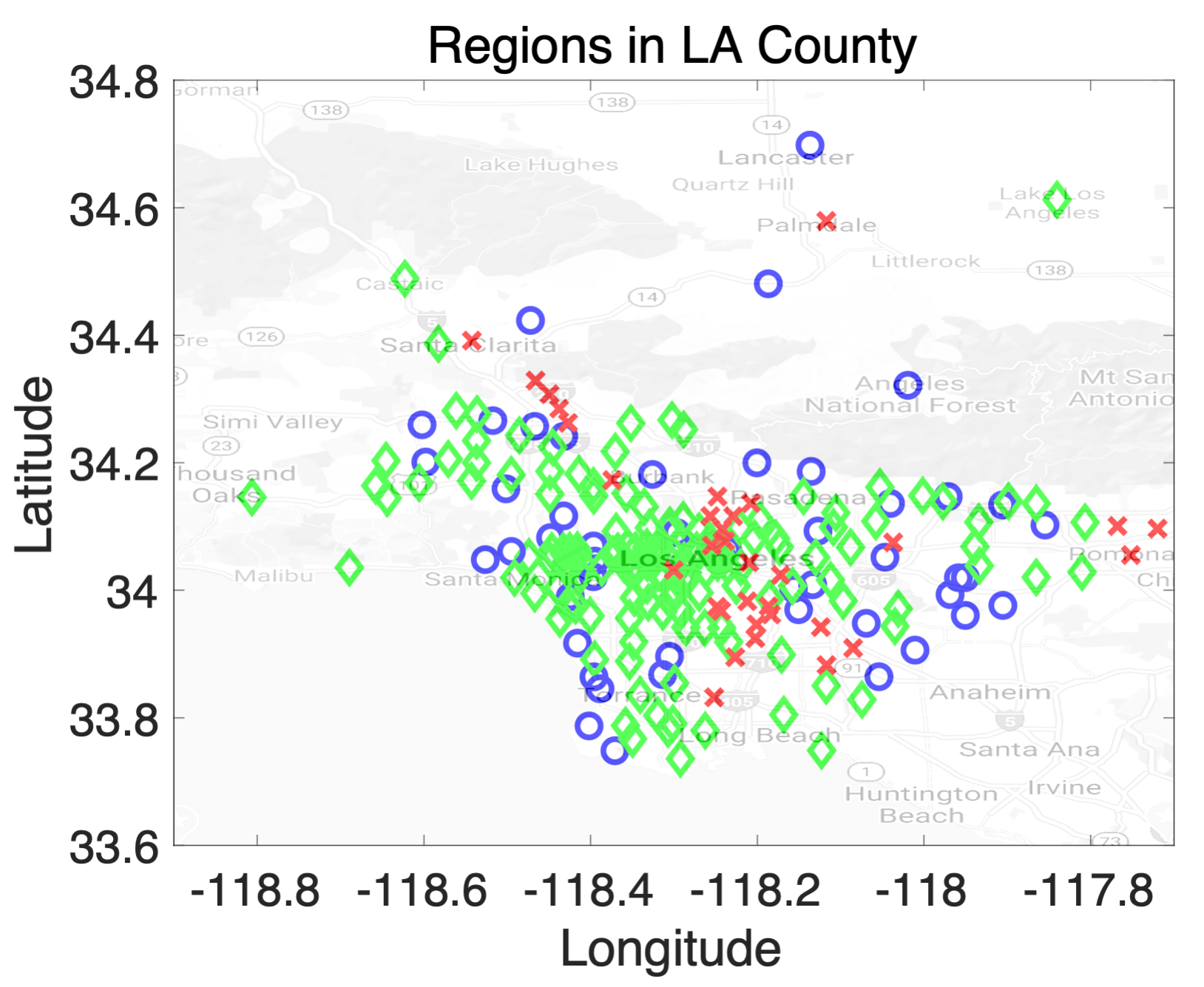}}
  \end{tabular}%

}
\caption{Changing of clustering with respect to time for the COVID-19 data and PSTREL formula $\varphi(c,d)$ with $\somewhere{[0,d]}{}\{\F_{[0,10]}(x > c)\}$. The plots confirm the rapid spread of the COVID-19 virus in LA county from April 2020 to September 2020.} 
\label{fig:cluster_changing_time}
\end{figure}

\subsection{BSS data from the city of Edinburgh}
Bike-Sharing System is an affordable and popular form of transportation that has been introduced to many cities such as the city of Edinburgh in recent years \cite{kreikemeyer2020probing}. Each BSS consists of a number of bike stations, distributed over a geographic area, and each station has a fixed number of bike slots. The users can pick up a bike, use it for a while, and then return it to the same or another station in the area. It is important for a BSS to satisfy the demand of its users. For example, ``within a short time, there should always be a bike and empty slot available in each station or its adjacent stations''. Otherwise, the user either has to wait a long time in the station or walk a long way to another far away station that has bike/slot availability. Understanding the behavior of a BSS can help users to decide on which station to use and also help operators to choose the optimal number of bikes/empty slots for each station. In this work we use STREL formulas to understand the behavior of BSS. Particularly, we consider the BSS in the city of Edinburgh which consist of 61 stations (excluding the stations with more than $15\%$ missing value). We construct the spatial model using the $(\alpha,\dhaver)$-Enhanced MSG approach with $\alpha=2$ resulting in a total of 91 edges in the spatial model.
A BSS should have at least one of two important properties to satisfy the demand of users. The first property is: ``a bike/empty slot should be available in the station within a short amount of time''. The second property is ``if there is no bike/empty slot available in a station, there should be a bike/empty slot available in the nearby stations''. The first property can be described using PSTREL formula $\varphi_{wait}(\tau) = (\F_{[0,\tau]}Bikes \geq 1) \wedge (\F_{[0,\tau]}Slots \geq 1)$, which means that at some time within the next $\tau$ seconds, there will be at least one bike/empty slot available in the station. Stations with small values of $\tau$ have short wait time which is desirable for the users, and stations with large values of $\tau$ have long wait times which is undesirable. For the second case, the users might prefer to walk a short distance to nearby stations instead of waiting a long time in the current station for a bike/slot to be available; this is related to the second important property of a BSS. The associated PSTREL formula for the second property is $\varphi_{walk}(d)=(\somewhere{[0,d]}{}Bikes \geq 1) \wedge (\somewhere{[0,d]}{}Slots \geq 1)$, which means that ``at some station within the radius d of the current station, there is at least one bike/empty slot available''. If the value of $d$ for a station is large, this means that the user should walk a long way to far stations to find a bike/empty slot available. 
We combine the two PSTREL formulas into $\varphi(\tau, d) = G_{[0,3]}\{\varphi_{wait}(\tau) \vee \varphi_{walk}(d)\}$, which means that always within the next 3 hours at least one of the properties $\varphi_{wait}(\tau)$ or $\varphi_{walk}(d)$ should hold. We try to learn the tight value of $\tau$ and $d$ for each station.  Stations with small values of $\tau$ and $d$ are desirable stations. We first apply Algo.~\ref{alg:bisection_search} with the order $d >_\params \tau$ to learn the tight parameter valuations, and then apply our clustering followed by a Decision Tree algorithm to learn separating hyper-boxes for the clusters. The results are illustrated in Fig.~\ref{fig:bss_clustering_runex}.

{\em Green} points that have small values of $\tau$ and $d$ are desirable stations. The {\em orange} point is associated with a station with a long wait time (around $35$ minutes). The {\em red} points are the most undesirable stations as they have a long wait time and do not have nearby stations that have bike/empty slots availability. Fig.~\ref{fig:bss_map_runex} shows the location of each station on map which confirms that the stations associated with the {\em red} points are far from other stations. That's the reason for having a large value of $d$. The time that takes to learn the clusters is 681.78 seconds. Next, we try to learn an interpretable STREL formula for each cluster based on the monotonicity direction of $\tau$ and $d$ as follows:

\begin{align*}
&\varphi_{green} = \varphi(17.09,2100) \wedge \neg \varphi(0,2100) \wedge \neg \varphi(17.09,0) = \varphi(17.09,2100)\\ &\varphi_{orange} = \varphi(50,1000.98) \wedge \neg \varphi(17.09,1000.98) \wedge \varphi(50,0) = \varphi(50,1000.98) \wedge \\ &\neg \varphi(17.09,1000.98) \\& \varphi_{red} = \varphi(50, 2100) \wedge \neg \varphi(17.09, 2100) \wedge \neg \varphi(50, 1000.98) = \neg \varphi(17.09, 2100) \wedge \\ &\neg \varphi(50, 1000.98)
\end{align*}

By replacing $\varphi(\tau, d)$ with $G_{[0,3]}\{\varphi_{wait}(\tau) \vee \varphi_{walk}(d)\}$ in the above STREL formulas we get:

\begin{align*}
&\varphi_{green} = G_{[0,3]}\{\varphi_{wait}(17.09) \vee \varphi_{walk}(2100)\}\\ &\varphi_{orange} = G_{[0,3]}\{\varphi_{wait}(50) \vee \varphi_{walk}(1000.98)\} \wedge \\ &\neg G_{[0,3]}\{\varphi_{wait}(17.09) \vee \varphi_{walk}(1000.98)\}\\ &\varphi_{red} = \neg G_{[0,3]}\{\varphi_{wait}(17.09) \vee  \varphi_{walk}(2100)\} \wedge \\ &\neg G_{[0,3]}\{\varphi_{wait}(50) \vee \varphi_{walk}(1000.98)\}
\end{align*}

The intuition behind the learned STREL formula for the green cluster is that always within the next 3 hours the wait time for bike/slot availability is less than 17.09 minutes, or the walking distance to the nearby stations with bike/slot availability is less than 2100 meters. Fig.~\ref{fig:bss_clustering_runex} illustrates that the actual walking distance for green points is less than or equal to 1000.98 meters, and the reason for learning 2100 meters is that the Decision Tree tries to learn robust and relaxed boundaries for each class. The STREL formula $\varphi_{orange}$ means that for the next 3 hours at least one of the properties $\varphi_{wait}(50)$ or $\varphi_{walk}(1000.98)$ holds for the orange stations. However, for a smaller wait time equal to 17.09 seconds, at least once in the next 3 hours, both the properties $\varphi_{wait}(17.09)$ and $\varphi_{walk}(1000.98)$ do not hold. The intuition behind the learned STREL formula for the orange stations is that, the orange stations have long wait time because they satisfy the property $G_{[0,3]}\{\varphi_{wait}(50) \vee \varphi_{walk}(1000.98)\}$ and falsify the property $G_{[0,3]}\{\varphi_{wait}(17.09) \vee \varphi_{walk}(1000.98)\}$. The red points falsify the property $G_{[0,3]}\{\varphi_{wait}(17.09) \vee  \varphi_{walk}(2100)\}$ which is associated with a short wait time and the property $G_{[0,3]}\{\varphi_{wait}(50) \vee \varphi_{walk}(1000.98)\}$ which is associated with the short walking distance. Therefore, the red points are the most undesirable stations.



\mypara{Comparison with Traditional ML approaches}To compare our framework with traditional ML approaches, we apply KMeans clustering from tslearn library \cite{JMLR:v21:20-091}, which is a library in Python for analysis of time-series data, on the BSS data which uses DTW for similarity metric. The results of clustering are represented in Fig.~\ref{fig:bss_kmeans}. We also apply k-Shape clustering from tslearn library on the same dataset, where the results are illustrated in Fig.~\ref{fig:bss_kshape}. k-Shape is a partitional clustering algorithm that preserves the shapes of time series. While KMeans and k-Shape clustering approaches perform faster than our approach (both approaches run in less than 5 seconds), the artifacts trained by these approaches often lack interpretability despite our approach (for example, identifying stations with long wait time or far away stations). Furthermore, since our learning approach for each node/location is independent of other locations, it is possible to improve the run-time by parallelization.

\begin{figure}[!tp]
\centering
\setkeys{Gin}{height=3cm} 

\makebox[\textwidth]{%
  \setlength{\tabcolsep}{3pt}%
  \begin{tabular}{@{}cc@{}}
    \subfloat[Clusters learned from BSS data \label{fig:bss_kmeans_slots}]{\includegraphics[width = .3\textwidth]{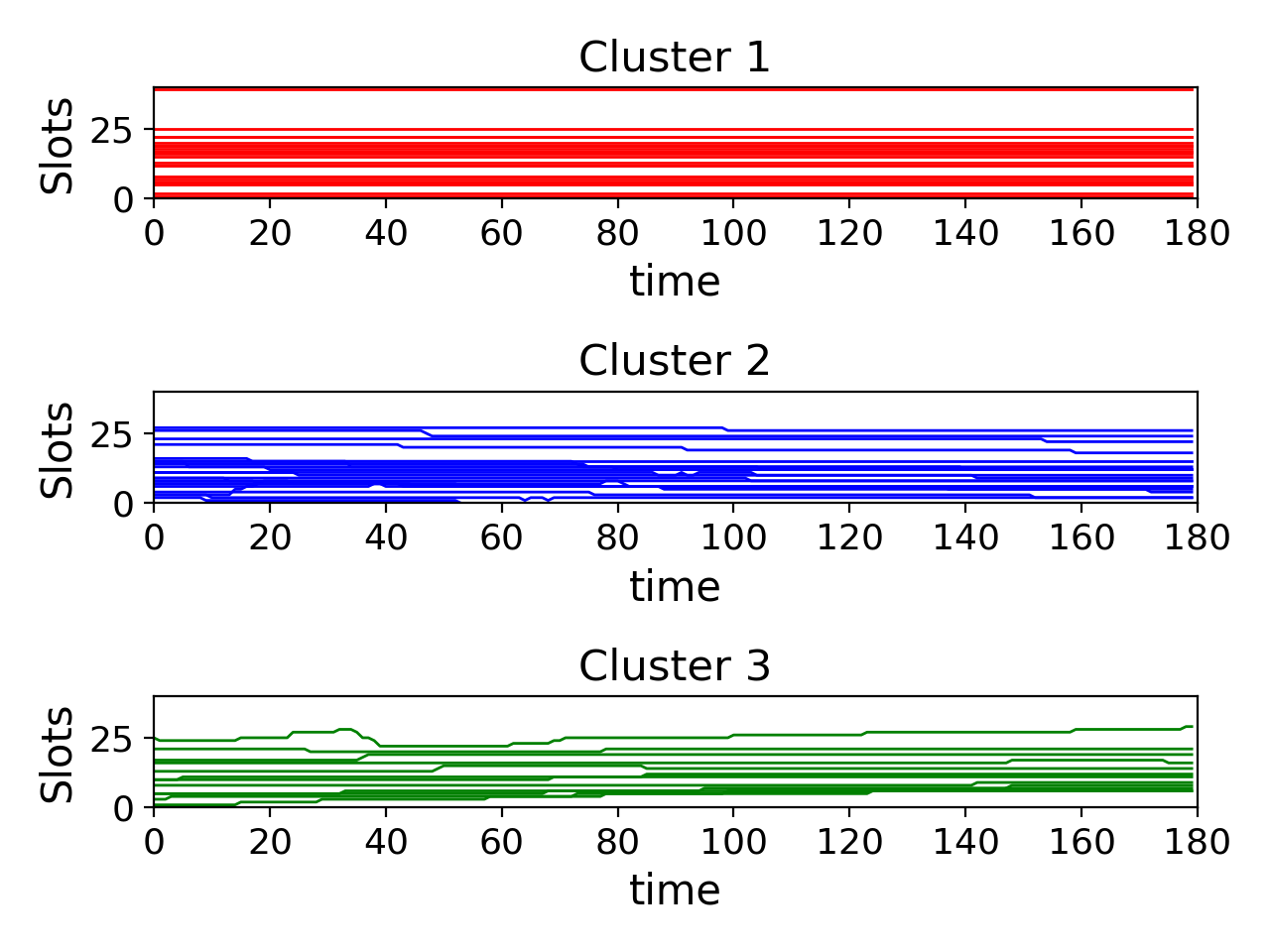}} \quad
  \subfloat[Clusters learned from BSS data \label{fig:bss_kmeans_bikes}]{\includegraphics[width = .3\textwidth]{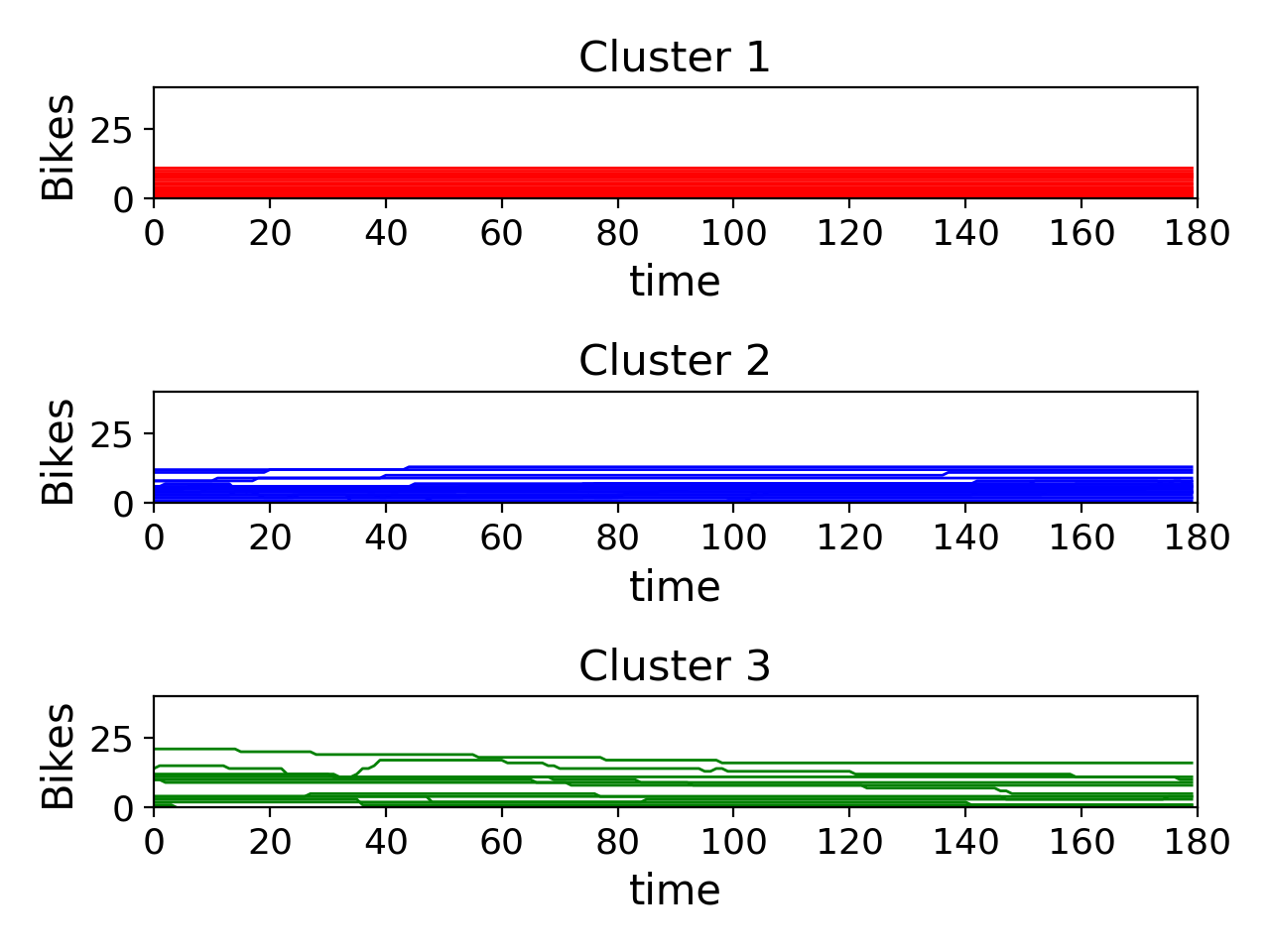}} \quad
  \subfloat[Regions in Edinburgh city associated with the learned clusters.  \label{fig:bss_kmeans_regions}]{\includegraphics[width = .3\textwidth]{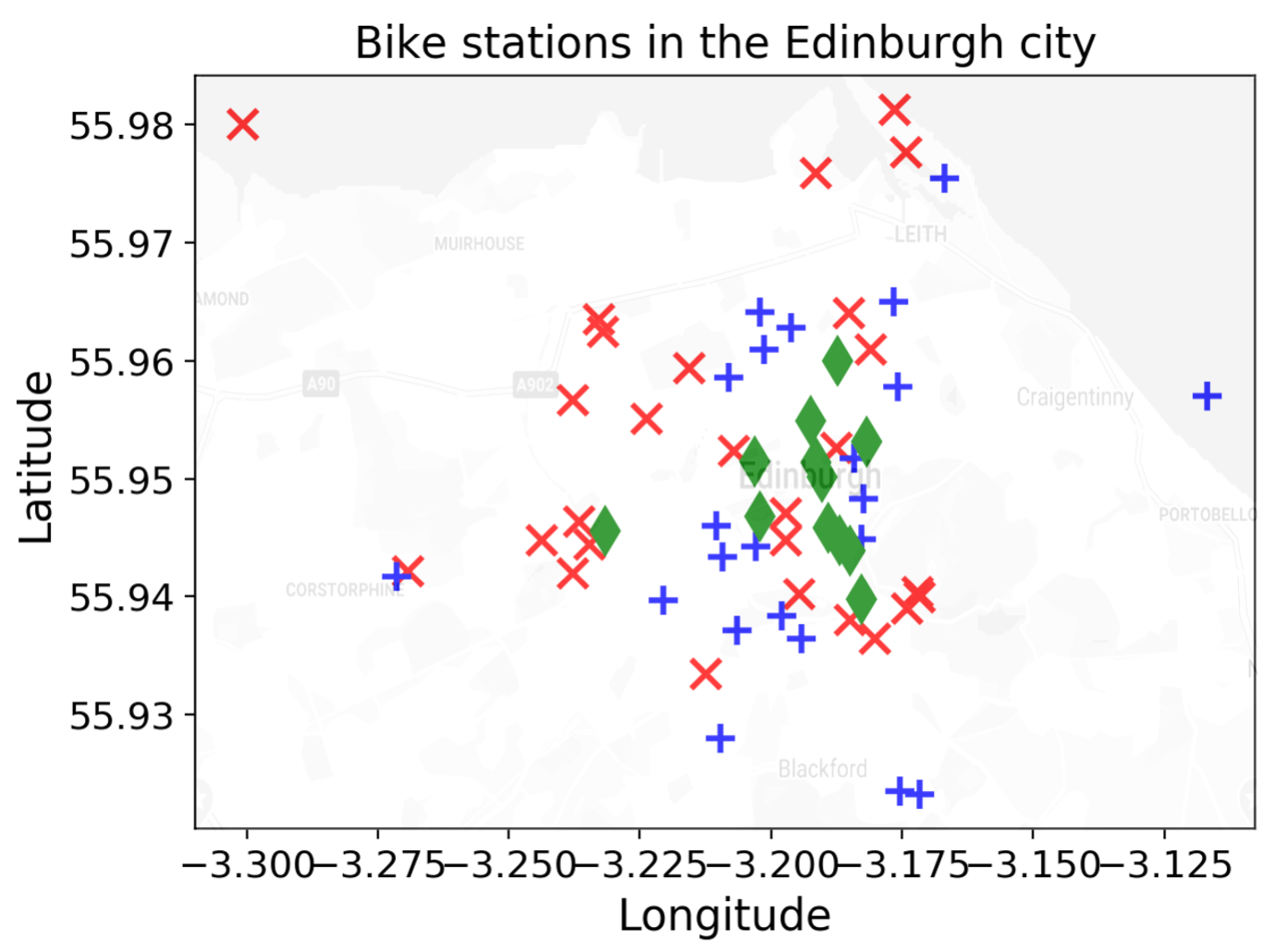}}

  \end{tabular}%
}
\caption{Using KMeans approach from tslearn library with DTW metric to cluster BSS spatio-temporal traces.} 
\label{fig:bss_kmeans}
\end{figure}

\begin{figure}[!tp]
\centering
\setkeys{Gin}{height=3cm} 

\makebox[\textwidth]{%
  \setlength{\tabcolsep}{3pt}%
  \begin{tabular}{@{}cc@{}}
    \subfloat[Clusters learned from BSS data \label{fig:bss_kshape_slots}]{\includegraphics[width = .3\textwidth]{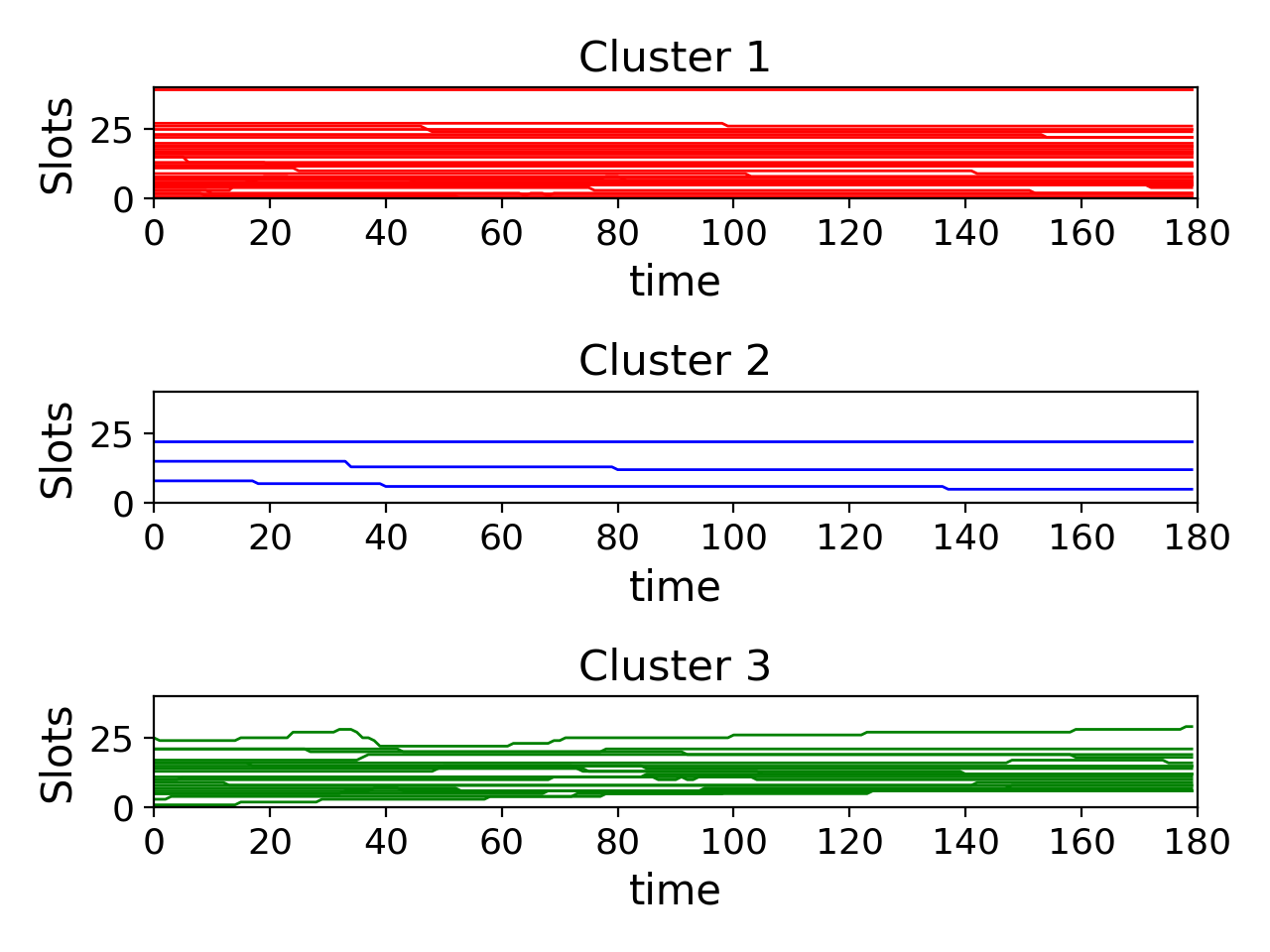}} \quad
  \subfloat[Clusters learned from BSS data \label{fig:bss_kshape_bikes}]{\includegraphics[width = .3\textwidth]{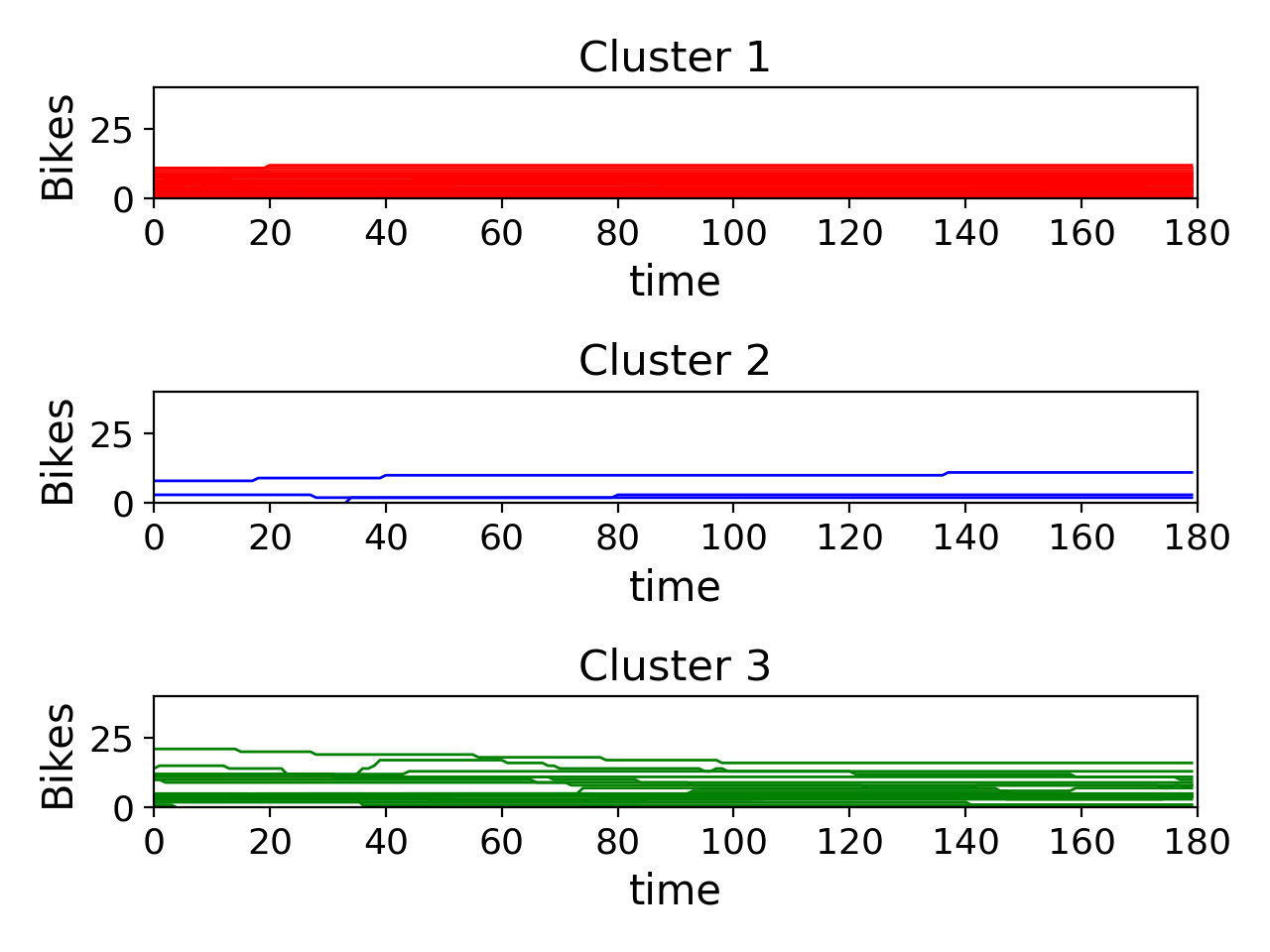}} \quad
  \subfloat[Regions in Edinburgh city associated with the learned clusters.  \label{fig:bss_kshape_regions}]{\includegraphics[width = .3\textwidth]{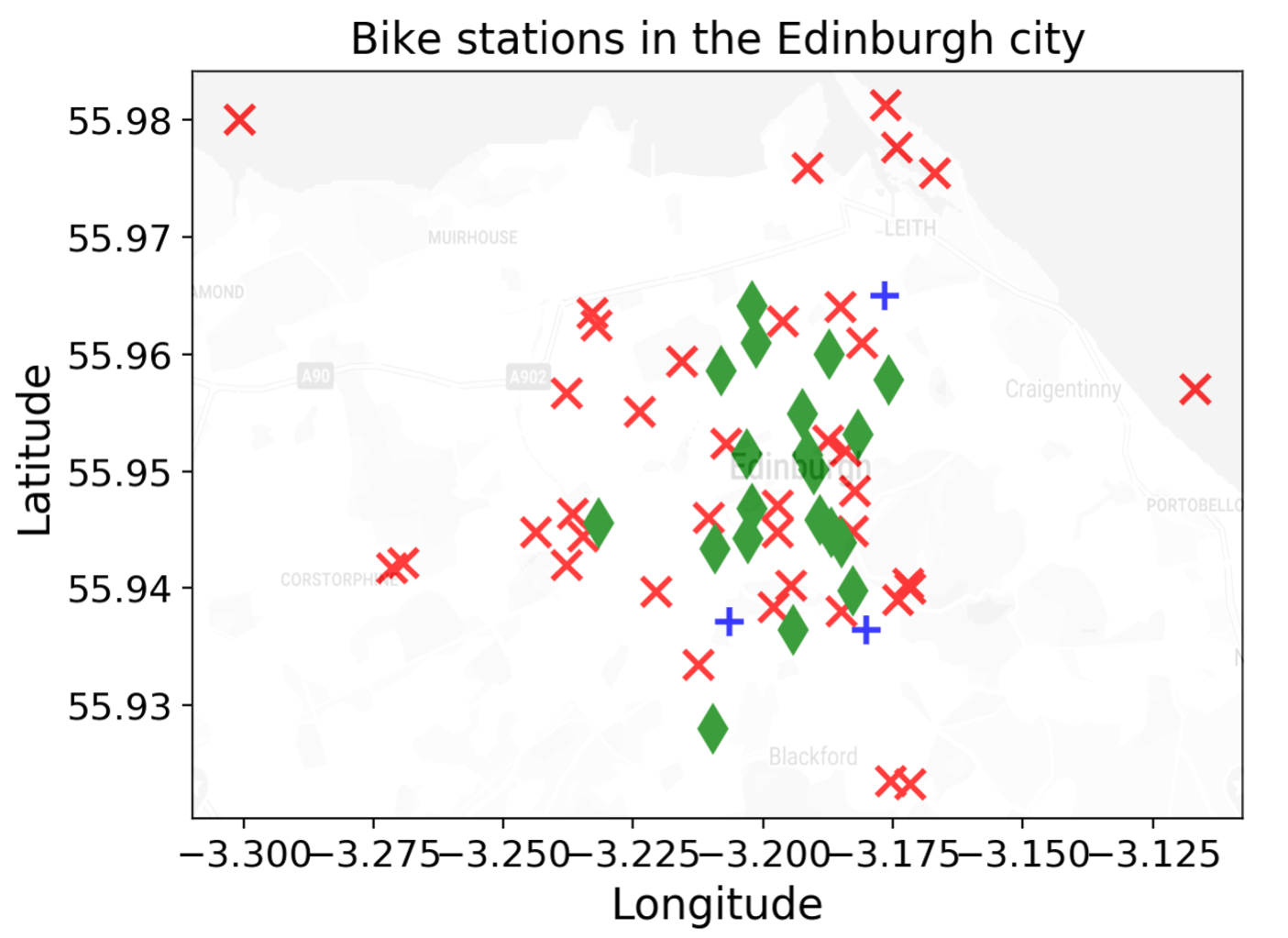}}

  \end{tabular}%
}
\caption{Using Kshape approach from tslearn library to cluster BSS spatio-temporal traces.} 
\label{fig:bss_kshape}
\end{figure}

\subsection{Outdoor Air Quality data from California}
Air pollution is one of the most important emerging concerns in environmental studies which can result in global warming and climate change. Most importantly, air pollution was ranked among the top leading cause of death in the world \cite{gunawan2018design}. For example, particulate matters smaller than $2.5 \mu m$ (PM2.5) can deeply penetrate into lung and impair its functions \cite{xing2016impact}. As a result, scientist are using the concentration of PM2.5 in air as the most important air quality index. Therefore, understanding the pattern of air pollutants like PM2.5 in different regions of a city is of significant importance because it can help people to take the necessary precautions such as preventing unnecessary communications in pollutant areas. In this work, we consider the Air Quality data from California that has been gathered by the United States Environmental Protection Agency (US EPA). Among reported pollutants we focus on PM2.5 contaminant, and try to learn the patterns in the amount of PM2.5 in the air using STREL formulas. The standard levels of PM2.5 are as follows: (1) $0 \leq PM2.5 \leq 12.0 \mu g/m^3$ is considered as good, (2) $12.1 \leq PM2.5 \leq 35.4 \mu g/m^3$: moderate, (3) $35.5 \leq PM2.5 \leq 55.4 \mu g/m^3$: unhealthy for sensitive groups, and (4) $55.5 \leq PM2.5$ is considered as unhealthy. To create the spatial model, we use the $(\alpha,d)$-Enhanced MSG Spatial Model approach with $\alpha=2$, resulting in a total of 160 edges.
We consider the PSTREL formula $\varphi(c,d) = G_{[0,10]}\{\escape{[d,16000]}{}(PM2.5 < c)\}$, and try to find the tight values of $c$ and $d$ for each region in California. A location $\ell$ will satisfy this property if it is always true within the next 10 days, that there exists a location $\ell'$ at a distance more than $d$, and a route $\tau$ starting from $\ell$ and reaching $\ell'$ such that all the locations in the route satisfy the property $PM2.5 < c$. Hence, it might be possible to escape at a distance greater than $d$ always satisfying property $PM2.5 < c$. 
The result of applying our learning method on Air Quality data with respect to this PSTREL formula is illustrated in Fig.~\ref{fig:pollution_exampl2_clustering}. The run-time of the algorithm is 136.02 seconds. Cluster 8 is the best cluster as it has a small value of $c$ and large value of $d$ which means that there exists a long route from the point in cluster 8 with low density of PM2.5. Cluster 3 is the worst cluster as it has a large value of $c$ and a small value of $d$, which means that the density of $PM2.5$ in these regions is extremely high. We try to learn an STREL formula for clusters 8 and 3 as follows:

\begin{align*}
&\varphi_3 = \varphi(500,0) \wedge \neg \varphi(500,2500) \wedge \neg \varphi(216,0) = G_{[0,10]}\{\escape{[0,16000]}{}(PM2.5 < 500)\} \wedge \\& \neg G_{[0,10]}\{\escape{[2500,16000]}{}(PM2.5 < 500)\} \wedge \neg G_{[0,10]}\{\escape{[0,16000]}{}(PM2.5 < 216)\} 
\\ &
\varphi_8 = \varphi(83.5,12687.5) \wedge \neg \varphi(0,12687.5) \wedge \neg \varphi(83.5,15000) = \varphi(83.5,12687.5) = \\& G_{[0,10]}\{\escape{[12687.5,16000]}{}(PM2.5 < 83.5)\}
\end{align*}

The formula $\varphi_3$ holds in locations where, in the next 10 days, $PM2.5$ is always less than $500$, but at least in 1 day $PM2.5$ reaches $216$ and  at least in 1 day there is no safe paths reaching locations at the distance greater than 2500 where all the locations in the routes have still $PM2.5 < 500$.
The formula $\varphi_8$ holds in all the locations where it is  always true, within the next 10 days, that there exists a route reaching a location at a distance greater than 12687.5 m from the the current one, such that all the locations in the route have $PM2.5$ less than $83.5$

\subsection{Food Court Building}
\begin{figure}[!tp]
\centering
\setkeys{Gin}{height=4cm} 

\makebox[\textwidth]{%
  \setlength{\tabcolsep}{4pt}%
  \begin{tabular}{@{}cc@{}}
    \subfloat[Clusters learned from Food Court data.\label{fig:building_clustering}]{\includegraphics[width = .4\textwidth]{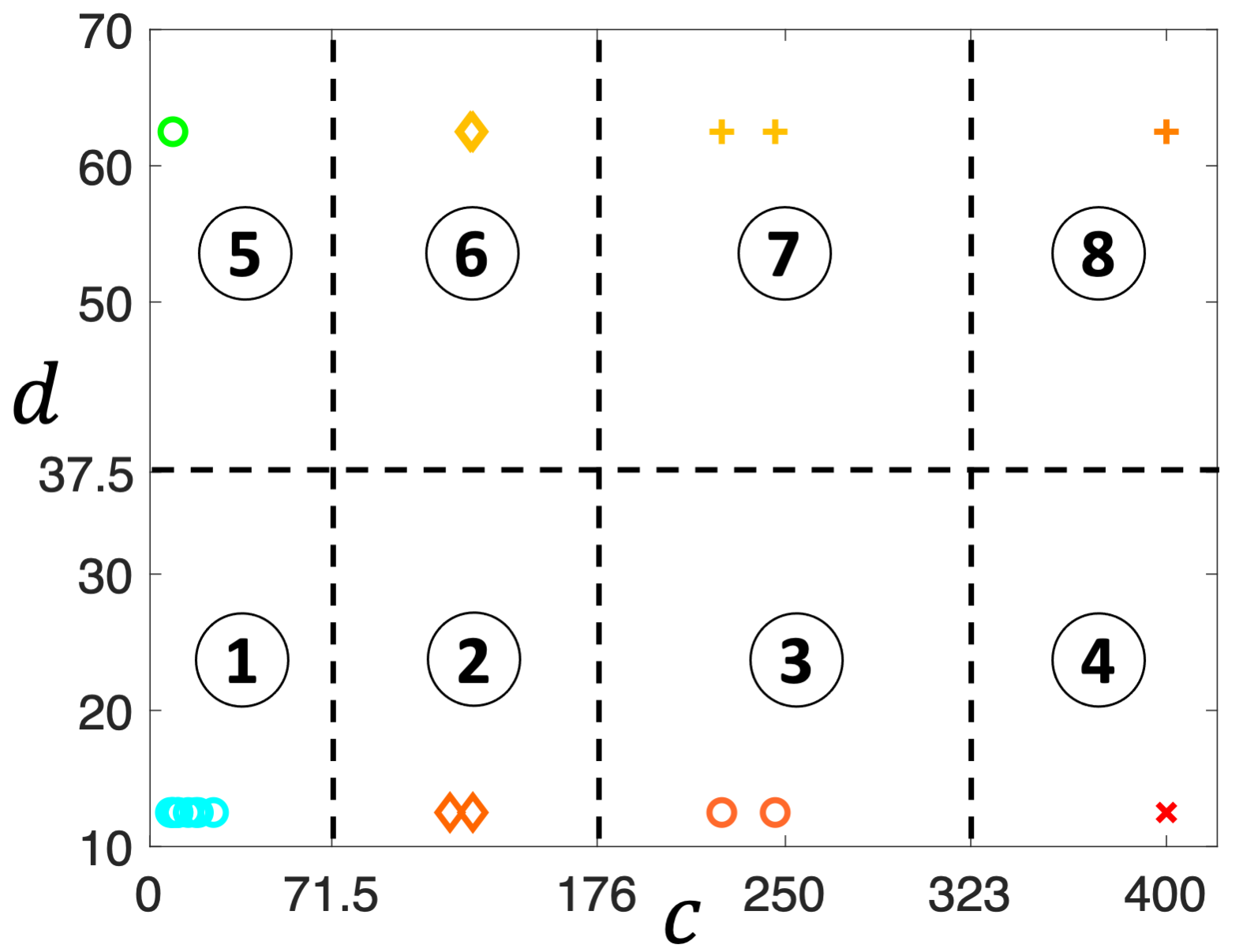}}\quad
  \subfloat[Locations in the Food Court associated with the learned clusters.\label{fig:building_regions}]{\includegraphics[width = .4\textwidth]{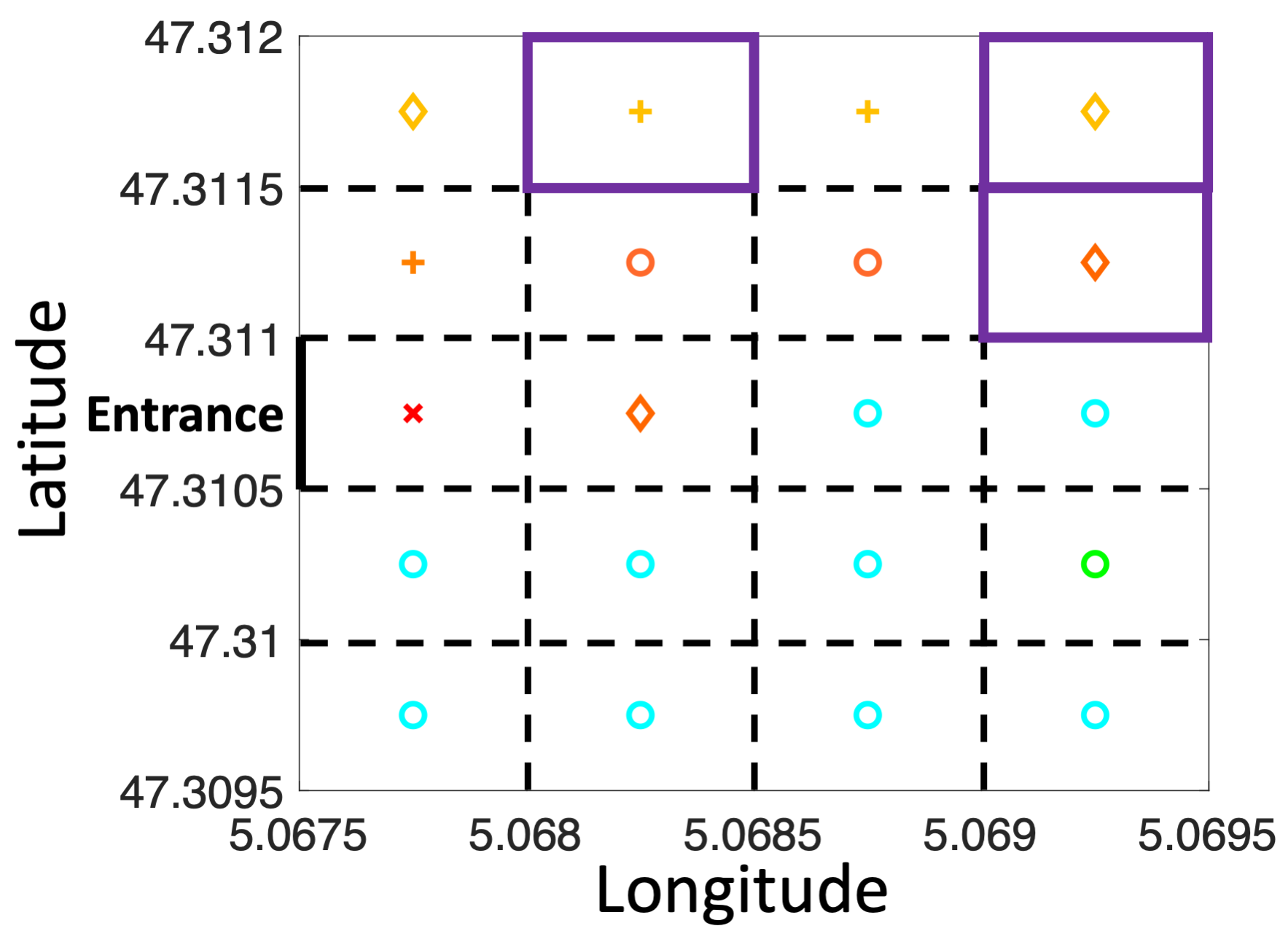}}
  \end{tabular}%
}
\caption{}
\label{fig:pollution_foodcourt}
\end{figure}

Here, we consider a synthetic dataset that simulates movements of customers in different areas of a food court. Due to COVID-19 pandemic, we are interested in identifying crowded areas in the food court, to help managers of the food court to have a better arrangement of different facilities (for example, keeping the popular restaurants in the food court separated) to prevent crowded area. To synthesize the dataset, we divide the food court into 20 regions, considering one of the regions as entrance, and three of them as popular restaurants. We create the dataset using the following steps: (1) we make a simplifying assumption that all of the customers enter the food court at time 0, which means that we set initial location of each customer as entrance (2) in every 10 minutes, we choose a random destination for each customer. The destination can be current location of the customer (in this case the customer does not go anywhere), one of the popular areas (we choose the probability as 0.8) or other areas of the food court. After choosing the destination for each customer, we simulate moving of the customer towards the destination with speed of $1.4 m/s$, which is the average speed of walking. To create the spatial model, we assume the centers of each of the 20 regions as a node and connect the 20 nodes using the $(\alpha,\dhaver)$-Enhanced MSG approach with $\alpha=2$ resulting in 35 edges in the graph.
The PSTREL formula $\varphi(c,d) = \somewhere{[0,d]}{}\{\F_{[0,3]}(numPeople > c)\}$ means that somewhere within the radius $d$ from a location, at least once in the next 3 hours, the number of people in the location exceeds the threshold $c$. Using this property, we can identify the crowded areas, and take necessary actions to mitigate the spread of COVID-19 in the Food Court area. We apply our method on the synthetic dataset that simulates the movements of customers in a Food Court, and try to learn the tight values of $d$ and $c$ for each location followed by clustering and our Decision Tree classification approach. The run-time of our learning approach is 78.24 seconds, and the results are illustrated in Fig.~\ref{fig:building_clustering}. Cluster 4 which has a large value of $c$ and a small value of $d$ is associated with the most crowded area. We show the locations in the Food Court associated with each cluster in Fig.~\ref{fig:building_regions}. The results show a larger value of $c$ for the entrance and  popular restaurants which confirm that these area are more crowded compared to other locations in the Food Court. Next, we try to learn an STREL formula for cluster 4 (associated with the most crowded location) and cluster 5 (associated with the most empty location) as follows:

\begin{align*}
&\varphi_4 = \varphi(323,37.5) \wedge \neg \varphi(323,10) \wedge \neg \varphi(420,37.5) = \varphi(323,37.5) = \\ &\somewhere{[0,37.5]}{}\{\F_{[0,3]}(numPeople > 323)\}\\ 
&\varphi_5 = \varphi(0,70) \wedge \neg \varphi(71.5,70) \wedge \neg \varphi(0,37.5) = \somewhere{[0,70]}{}\{\F_{[0,3]}(numPeople > 0)\} \wedge \\ &\everywhere{[0,70]}{}\{G_{[0,3]}(numPeople \leq 71.5)\} \wedge \everywhere{[0,37.5]}{}\{G_{[0,3]}(numPeople = 0)\}
\end{align*}

The STREL formula $\varphi_4$ means that somewhere within the radius 37.5 meters from the location, at least once within the next 3 hours, the number of people exceeds 323 which shows a crowded area. The learned formula for cluster 5 means that for the next 3 hours there will be no one in the radius 37.5 from the location. However, between the radius 37.5 and 70 from the location, there will be some people but the number does not exceed 72.
\label{sec:apendix}

\end{document}